\newcommand{\RR}{\mathbb{R}}
\newcommand{\NN}{\mathbb{N}}
\newcommand{\B}[1]{\mathbf{#1}}
\newcommand{\lgl}{<_{gl}}
\newcommand{\leqgl}{\leq_{gl}}
\newtheorem{theorem}{Theorem}
\newtheorem{lemma}{Lemma}
\newtheorem{definition}{Definition}
\newtheorem{assumption}{Assumption}
\newenvironment{proof}[1][]{\noindent {\bf Proof #1:\;}}{\hfill $\Box$}
\title{Sorting out typicality with the inverse moment matrix SOS polynomial}
\author{
  Jean-Bernard Lasserre \\
  LAAS-CNRS \& IMT\\
	Universit\'e de Toulouse\\
  31400 Toulouse, France \\
  \texttt{lasserre@laas.fr} 
  \And
  Edouard Pauwels\\
  IRIT \& IMT\\
  Universit\'e Toulouse 3 Paul Sabatier\\
  31400 Toulouse, France \\
  \texttt{edouard.pauwels@irit.fr} 
}
\begin{document}

\maketitle

\begin{abstract}
	We study a surprising phenomenon related to the representation of a cloud of data points using polynomials. We start with the previously unnoticed empirical observation that, given a collection (a cloud) of data points, the sublevel sets of a certain distinguished polynomial capture the shape of the cloud very accurately. This distinguished polynomial is a sum-of-squares (SOS) derived in a simple manner from the inverse of the empirical moment matrix. In fact, this SOS polynomial 
	is directly related to orthogonal polynomials and the \textit{Christoffel} function. This allows to generalize and interpret extremality properties  of orthogonal polynomials and to provide a mathematical rationale for the observed phenomenon. Among diverse potential applications, we illustrate the relevance of our results on a network intrusion detection task for which we obtain performances similar to existing dedicated methods reported in the literature.
\end{abstract}

\section{Introduction}
Capturing and summarizing the global shape of a cloud of points is at the heart of many data processing applications such as novelty detection, outlier detection as well as related unsupervised learning tasks such as clustering and density estimation. One of the main difficulties is to account for potentially complicated shapes in multidimensional spaces, or equivalently to account for non standard dependence relations between variables. Such relations become critical in applications, for example in fraud detection
where a fraudulent action may be the dishonest combination of several actions, each of them being reasonable when considered on their own.

Accounting for complicated shapes is also related to computational geometry and nonlinear algebra applications, for example integral computation \cite{lasserre2015level} and reconstruction of sets from moments data \cite{milanfar,GPSS,lasserre2015algebraic}. Some of these problems have connections and potential applications in machine learning. The work presented in this paper brings together ideas from both disciplines, leading to a method which allows to encode in a simple manner
the global shape and spatial concentration of points within a cloud.

We start with a surprising (and apparently unnoticed) empirical observation. Given a collection of points, one may build up
a distinguished sum-of-squares (SOS) polynomial whose coefficients (or Gram matrix) is the inverse of the empirical moment matrix (see Section \ref{sec:empirical}).
Its degree depends on how many moments are considered, a choice left to the user. Remarkably its sublevel sets capture much of the global shape of the cloud as illustrated in Figure \ref{fig:examples}. This phenomenon is {\it not} incidental as illustrated in many additional examples in Appendix A. To the best of our knowledge, this observation has remained unnoticed 
and the purpose of this paper is  to report this empirical finding to the machine learning community and provide first elements toward a mathematical understanding as well as potential machine learning applications.

The proposed method is based on the computation of the coefficients of a very specific polynomial which depends solely on the empirical moments
associated with the data points. From a practical perspective, this can be done via a single pass through the data, or even in an {\it online} fashion via a sequence of efficient Woodbury updates. 
Furthermore the computational cost of evaluating the polynomial does {\it not} depend on the number of data points which is a crucial difference with existing nonparametric methods such as nearest neighbors or kernel based methods \cite{chandola2009anomaly}. On the other hand, this computation requires the inversion of a matrix 
whose size depends on the dimension of the problem (see Section \ref{sec:empirical}). Therefore, the proposed framework is suited for moderate dimensions and potentially very large number of observations.

In Section \ref{sec:interpretation} we first describe an affine invariance result which suggests that the distinguished SOS polynomial captures very intrinsic properties of clouds of points. In a second step, we provide a mathematical interpretation that supports our empirical findings based on connections with orthogonal polynomials \cite{dunkl2001orthogonal}. We propose a generalization of a well known extremality result for orthogonal univariate polynomials on the real line (or the complex plane) \cite[Theorem 3.1.2]{szego1974orthogonal}. As a consequence, the distinguished SOS polynomial of interest in this paper is understood as the unique optimal solution of a convex optimization problem: minimizing an average value over a structured set of positive polynomials. In addition, we revisit \cite[Theorem 3.5.6]{szego1974orthogonal} about the \textit{Christoffel} function.
The mathematics behind provide a simple and intuitive explanation for the phenomenon that we empirically observed.

Finally, in Section \ref{sec:experiments} we perform numerical experiments on KDD cup network intrusion dataset \cite{lichmanUCI2013}. Evaluation of the distinguished SOS polynomial provides a score that we use as a measure of outlyingness to detect network intrusions (assuming that they correspond to outlier observations). We refer the reader to \cite{chandola2009anomaly} for a discussion of available methods for this task. For the sake of a fair comparison
we have reproduced the experiments performed in \cite{williams2002comparative} for the same dataset. We report results similar to (and sometimes better than) those described in \cite{williams2002comparative} which suggests that the method is comparable to other dedicated approaches for network intrusion detection, including robust estimation and Mahalanobis distance \cite{hadi1994modification,knorr2001robust}, mixture models \cite{olivier1996} and recurrent neural networks \cite{williams2002comparative}.

\begin{figure}[t]
	\centering
	\includegraphics[width=.44\textwidth]{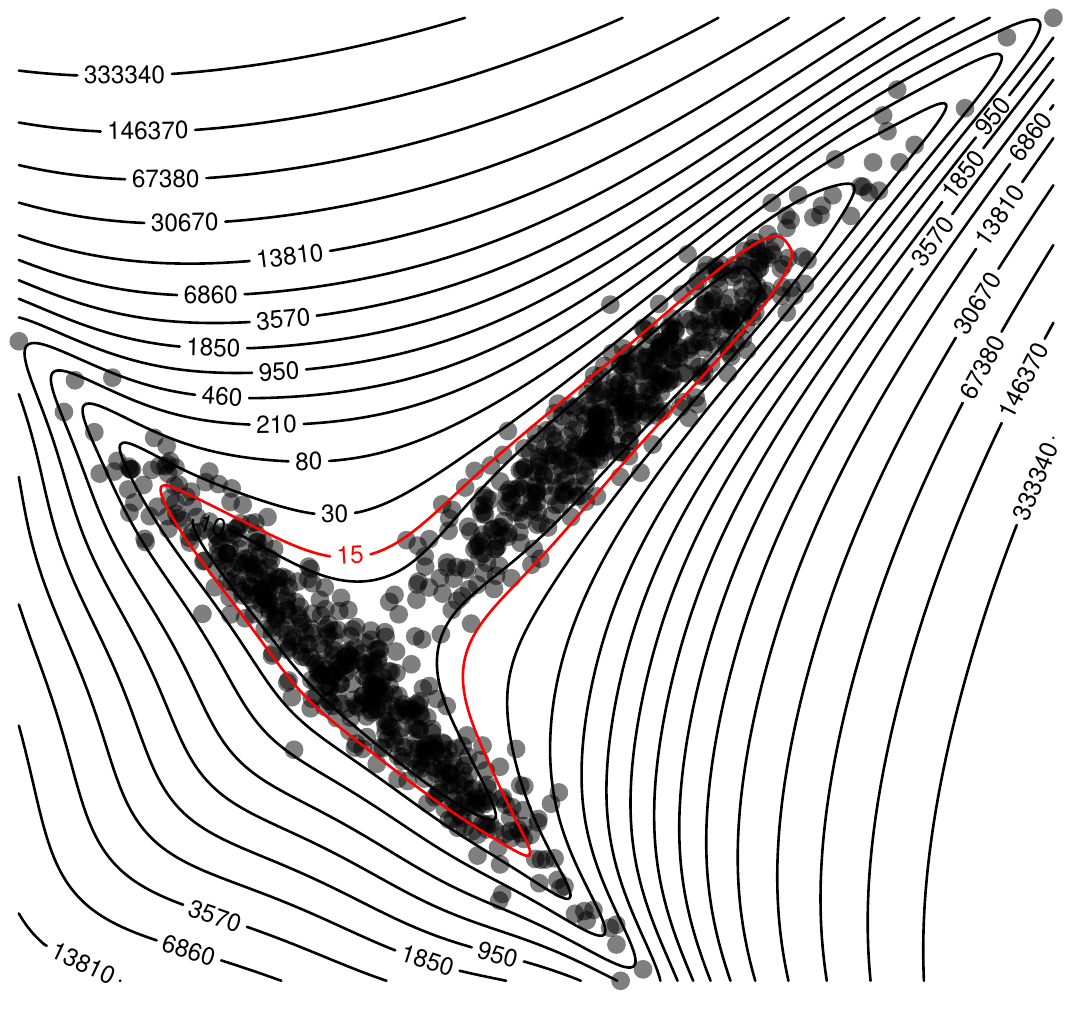}\quad\quad
	\includegraphics[width=.44\textwidth]{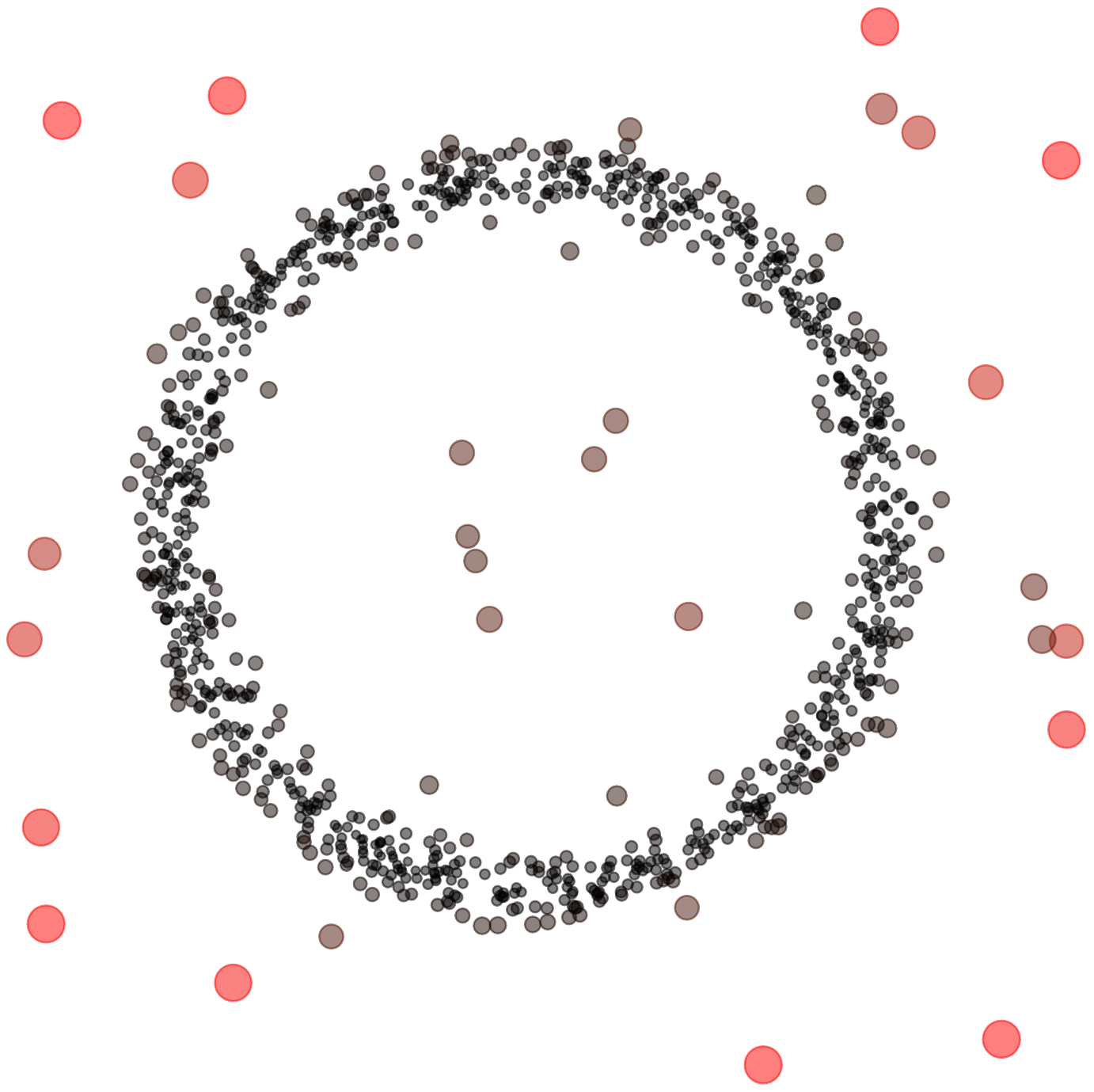}
	\caption{Left: 1000 points in $\RR^2$ and the level sets of the corresponding inverse moment matrix SOS polynomial $Q_{\mu,d}$ ($d = 4$). The level set ${p+d \choose d}$, which corresponds to the average value of $Q_{\mu,d}$, is represented in red. Right: 1040 points in $\RR^2$ with size and color proportional to the value of inverse moment matrix SOS polynomial $Q_{\mu,d}$ ($d=8$).}
	\label{fig:examples}
\end{figure}
\section{Multivariate polynomials and moments}
\label{sec:notations}
\subsection{Notations}
We fix the ambient dimension to be $p$ throughout the text. For example, we will manipulate vectors in $\RR^p$ as well as $p$-variate polynomials with real coefficients. We denote by $X$ a set of $p$ variables $X_1, \ldots, X_p$ which we will use in mathematical expressions defining polynomials. We identify monomials from the canonical basis of $p$-variate polynomials with their exponents in $\NN^p$: we associate to $\B{\alpha} = (\alpha_i)_{i =1 \ldots p} \in \NN^p$ the monomial $X^{\B{\alpha}} := X_1^{\B{\alpha}_1} X_2^{\B{\alpha}_2} \ldots X_p^{\B{\alpha}_p}$ which degree is $\deg(\B{\alpha}) := \sum_{i=1}^p \alpha_i$. We use the expressions $\lgl$ and $\leqgl$ to denote the graded lexicographic order, a well ordering over $p$-variate monomials. This amounts to, first, use the canonical order on the degree and, second, break ties in monomials with the same degree using the lexicographic order with $X_1 =a, X_2 = b \ldots$ For example, the monomials in two variables $X_1, X_2$, of degree less or equal to $3$ listed in this order are given by: $1,\,X_1, \,X_2, \,X_1^2, \,X_1X_2, \,X_2^2,\, X_1^3,\, X_1^2X_2,\, X_1X_2^2,\, X_2^3$.

We denote by $\NN^p_d$, the set $\left\{ \alpha \in \NN^p;\; \deg(\alpha) \leq d \right\}$ ordered by $\leqgl$. $\RR[X]$ denotes the set of $p$-variate polynomials: linear combinations of monomials with real coefficients. The degree of a polynomial is the highest of the degrees of its monomials with nonzero coefficients\footnote{For the null polynomial, we use the convention that its degree is $0$ and it is $\leqgl$ smaller than all other monomials.}. We use the same notation, $\deg(\cdot)$, to denote the degree of a polynomial or of an element of $\NN^p$. For $d \in \NN$, $\RR_d[X]$ denotes the set of $p$-variate polynomials of degree less or equal to $d$. We set $s(d) = {p+d \choose d}$, the number of monomials of degree less or equal to $d$. We will denote by $\B{v}_d(X)$ the vector of monomials of degree less or equal to $d$ sorted by $\leqgl$. We let $\B{v}_d(X) := \left( X^\alpha \right)_{\alpha \in \NN^p_d} \in \RR_d[X]^{s(d)}$. With this notation, we can write a polynomial $P\in\RR_d[X]$ as follows $P(X) = \left\langle\B{p}, \B{v}_d(X)\right\rangle$ for some real vector of coefficients $\B{p} = \left( p_{\B{\alpha}} \right)_{\B{\alpha} \in \NN_d^p} \in \RR^{s(d)}$ ordered using $\leqgl$. Given $\B{x} = (x_i)_{i = 1 \ldots p} \in \RR^p$, $P(\B{x})$ denotes the evaluation of $P$ with the assignments $X_1 = x_1, X_2 = x_2, \ldots X_p = x_{p}$. Given a Borel probability measure $\mu$ and $\B{\alpha} \in \NN^p$, $y_{\B{\alpha}}(\mu)$ denotes the moment $\B{\alpha}$ of $\mu$: $y_{\B{\alpha}}(\mu) = \int_{\RR^p} \B{x}^{\B{\alpha}} d\mu(\B{x})$. Throughout the paper, we will only consider measures of which all moments are finite.
\subsection{Moment matrix}
Given a Borel probability measure $\mu$ on $\RR^p$, the moment matrix of $\mu$, $M_d(\mu)$, is a matrix indexed by monomials of degree at most $d$ ordered by $\leqgl$. For $\B{\alpha},\B{\beta} \in \NN^p_d$, the corresponding entry in $M_d(\mu)$ is defined by $M_d(\mu)_{\B{\alpha},\B{\beta}} := y_{\B{\alpha} + \B{\beta}}(\mu)$, the moment $\B{\alpha} + \B{\beta}$ of $\mu$. When $p = 2$, letting $y_{\B{\alpha}} = y_{\B{\alpha}} (\mu)$ for $\B{\alpha} \in \NN_4^2$, we have
\[M_2(\mu): \quad\begin{array}{rccccccccc}
& & &1& X_1  & X_2  & X_1^2 & X_1 X_2 &  X_2^2 \\
 & & & \\
 1  &   \quad & &1 & y_{10} & y_{01} & y_{20}& y_{11}& y_{02} \\
X_1&   \quad &&y_{10} & y_{20} & y_{11}&y_{30} & y_{21}& y_{12} \\
X_2&   \quad & &y_{01} & y_{11} & y_{02}& y_{21} &y_{12}& y_{03} \\
X_1^2& \quad & &y_{20} & y_{30} & y_{21}& y_{40} & y_{31}& y_{22} \\
X_1X_2& \quad & &y_{11} & y_{21} & y_{12}& y_{31} & y_{22}& y_{13}\\
X_2^2& \quad & &y_{02} & y_{12} &y_{03}& y_{22} & y_{13}& y_{04}\\
\end{array}.\]
$M_d(\mu)$ is positive semidefinite for all $d \in \NN$. Indeed, for any $\B{p} \in \RR^{s(d)}$, let $P \in \RR_d[X]$ be the polynomial with vector of coefficients $\B{p}$, we have $\B{p}^TM_d(\mu)\B{p} = \int_{\RR^p} P^2(\B{x}) d\mu(\B{x}) \geq 0$. Furthermore, we have the identity $M_d(\mu) = \int_{\RR^p} \B{v}_d(\B{x}) \B{v}_d(\B{x})^T d\mu(\B{x})$ where the integral is understood elementwise.
\subsection{Sum of squares (SOS)}
We denote by $\Sigma[X] \subset \RR[X]$ (resp. $\Sigma_d[X] \subset \RR_d[X]$), the set of polynomials (resp. polynomials of degree at most $d$) which can be written as a sum of squares of polynomials. Let $P \in \RR_{2m}[X]$ for some $m \in \NN$, then $P$ belongs to $\Sigma_{2m}[X]$ if there exists a finite $J \subset \NN$ and a family of polynomials $P_j \in \RR_{m}[X]$, $j \in J$, such that $P = \sum_{j \in J} P_j^2$. It is obvious that sum of squares polynomials are always nonnegative. A further interesting property is that this class of polynomials is connected with positive semidefiniteness. Indeed, $P$ belongs to $\Sigma_{2m}[X]$ if and only if
\begin{align}
	\exists Q \in \RR^{s(m) \times s(m)},\, Q\succeq 0,\, P(\B{x}) = \B{v}_d(\B{x})^T Q \B{v}_d(\B{x}),\, \forall \B{x} \in \RR^p.
	\label{eq:equivSOS}
\end{align}
As a consequence, every positive semidefinite matrix $Q \in \RR^{s(m) \times s(m)}$ defines a polynomial in $\Sigma_{2m}[X]$ by using the representation in (\ref{eq:equivSOS}).

\section{Empirical observations on the inverse moment matrix SOS polynomial}
\label{sec:empirical}
 The {\it inverse moment-matrix SOS polynomial} is associated to a measure $\mu$ which satisfies the following.
\begin{assumption}
	\label{ass:mainAssumption}
	$\mu$ is a Borel probability measure on $\RR^p$ with all its moments finite and $M_d(\mu)$ is positive definite for a given $d \in \NN$.
\end{assumption}
\begin{definition}
	\label{def:SOSpoly}
	Let $\mu,d$ satisfy Assumption \ref{ass:mainAssumption}. We call the SOS polynomial $Q_{\mu,d} \in \Sigma_{2d}[X]$ defined by the application:
	\begin{align}
	\label{pol-def}
		\B{x}\mapsto\quad Q_{\mu,d}(\B{x})\, :=\, \B{v}_d(\B{x})^T M_d(\mu)^{-1} \B{v}_d(\B{x}),\qquad  \B{x} \in \RR^p,
	\end{align}
the inverse moment-matrix SOS polynomial of degree $2d$ associated to $\mu$.
\end{definition}
Actually, connection to orthogonal polynomials will show that the inverse function $\B{x}\mapsto Q_{\mu,d}(\B{x})^{-1}$ is called the {\it Christoffel} function in the literature \cite{szego1974orthogonal,dunkl2001orthogonal} (see also Section \ref{sec:interpretation}). 

In the remainder of this section, we focus on the situation when $\mu$ corresponds to an empirical measure over $n$ points in $\RR^p$ which are fixed. So let $\B{x}_1, \ldots, \B{x}_n \in \RR^p$ be a fixed set of points and let $\mu := \frac{1}{n} \sum_{i=1}^n \delta_{\B{x}_i}$ where $\delta_{\B{x}}$ corresponds to the Dirac measure 
at $\B{x}$. In such a case the polynomial $Q_{\mu,d}$ in (\ref{pol-def}) is determined only by the empirical moments up to degree $2d$ of our collection of points. Note that we also require that $M_d(\mu) \succ 0$. In other words, the points $\B{x}_1, \ldots, \B{x}_n$ do not belong to an algebraic set defined by a polynomial of degree less or equal to $d$. We first describe empirical properties of inverse moment matrix SOS polynomial in this context of empirical measures. A mathematical intuition and further properties behind these observations are developped in Section \ref{sec:interpretation}.
\subsection{Sublevel sets}
The starting point of our investigations is the following phenomenon which to the best of our knowledge
has remained unnoticed in the literature. For the sake of clarity and simplicity we provide an illustration in the plane. Consider the following experiment in $\RR^2$ for a fixed $d \in \NN$: represent on the same graphic, the cloud of points $\left\{ \B{x}_i \right\}_{i = 1 \ldots n}$ and the sublevel sets of SOS polynomial $Q_{\mu,d}$ in $\RR^2$ (equivalently, the superlevel sets of the Christoffel function). This is illustrated in the left panel of Figure \ref{fig:examples}. 
The collection of points consists of $500$ simulations of two different Gaussians and the value of $d$ is $4$. The striking feature of this plot is that the level sets capture the global shape of the cloud of points quite accurately. In particular, the level set $\{\B{x}:Q_{\mu,d}(\B{x})\leq {p+d\choose d}\}$ captures most of the points. We could reproduce very similar observations on different shapes with various number of points in $\RR^2$ and degree $d$ (see Appendix A).


\subsection{Measuring outlyingness}
An additional remark in a similar line is that $Q_{\mu, d}$ tends to take higher values on points which are isolated from other points. Indeed in the left panel of Figure \ref{fig:examples}, the value of the polynomial tends to be smaller on the boundary of the cloud. This extends to situations where the collection of points correspond to shape with a high density of points with a few additional outliers. We reproduce a similar experiment on the right panel of Figure \ref{fig:examples}. In this example, $1000$ points are sampled close to a ring shape and 40 additional points are sampled uniformly on a larger square. We do not represent the sublevel sets of $Q_{\mu,d}$ here. Instead, the color and shape of the points are taken proportionally to the value of $Q_{\mu,d}$, with $d = 8$.

First, the results confirm the observation of the previous paragraph, points that fall close to the ring shape tend to be smaller and points on the boundary of the ring shape are larger. Second, there is a clear increase in the size of the points that are relatively far away from the ring shape. This highlight the fact that $Q_{\mu,d}$ tends to take higher value in less populated areas of the space. 
\subsection{Relation to maximum likelihood estimation}
\label{sec:maxLikelihood}
If we fix $d=1$, we recover the maximum likelihood estimation for the Gaussian, up to a constant additive factor. To see this, set $\mu = \frac{1}{n} \sum_{i=1}^n \B{x}_i$ and $S = \frac{1}{n} \sum_{i=1}^n \B{x}_i \B{x}_i^T$. With this notation, we have the following block representation of the moment matrix,
\[M_d(\mu) = \left( 
\begin{array}{cc}
	1&\mu^T\\
	\mu &S
\end{array}
\right)\quad\quad M_{d}(\mu)^{-1} = \left( 
\begin{array}{cc}
	1 + \mu^T V^{-1}\mu & - \mu^T V^{-1}\\
	- V^{-1} \mu & V^{-1}
\end{array}
\right),\]
where $V = S - \mu \mu^T$ is the empirical covariance matrix and the expression for the inverse is given by Schur complement. In this case, we have $Q_{\mu,1}(\B{x}) = 1 + (\B{x} - \mu)^T V^{-1}(\B{x} - \mu)$ for all $\B{x} \in \RR^p$. 
We recognize the quadratic form that appears in the density function of the multivariate Gaussian with parameters estimated by maximum likelihood. This suggests a connection between the inverse SOS moment polynomial and maximum likelihood estimation. Unfortunately, this connection is difficult to generalize for higher values of $d$ and we do not pursue the idea of interpreting the empirical observations of this section through the prism of maximum likelihood estimation and leave it for further research. Instead, we propose an alternative view in Section \ref{sec:interpretation}.
\subsection{Computational aspects}
Recall that $s(d) = {p+d \choose d}$ is the number of $p$-variate monomials of degree up to $d$. The computation of $Q_{\mu,d}$ requires $O(n s(d)^2)$ operations for the computation of the moment matrix and $O(s(d)^3)$ operations for the matrix inversion. The evaluation of $Q_{\mu,d}$ requires $O(s(d)^2)$ operations.

Estimating the coefficients of $Q_{\mu,d}$ has a computational cost that depends only linearly in the number of points $n$. The cost of evaluating $Q_{\mu,d}$ is constant with respect to the number of points $n$. This is an important contrast with kernel based or distance based methods (such as nearest neighbors and one class SVM) for density estimation or outlier detection since they usually require at least $O(n^2)$ operations for the evaluation of the model \cite{chandola2009anomaly}. Moreover, this is well suited for online settings where inverse moment matrix computation can be done using Woodbury updates.

The dependence in the dimension $p$ is of the order of $p^d$ for a fixed $d$. Similarly, the dependence in $d$ is of the order of $d^p$ for a fixed dimension $p$ and the joint dependence is exponential. This suggests that the computation and evaluation of $Q_{\mu,d}$ will mostly make sense for moderate dimensions and degree $d$.
\section{Invariances and interpretation through orthogonal polynomials}
\label{sec:interpretation}
The purpose of this section is to provide a mathematical rationale that  {\it explains} the empirical observations made in Section \ref{sec:empirical}. All the proofs are postponed to Appendix B. We fix a Borel probability measure $\mu$ on $\RR^p$ which satisfies Assumption \ref{ass:mainAssumption}.
Note that $M_d(\mu)$ is always positive definite if $\mu$ is not supported on the zero set of a polynomial of degree at most $d$. Under Assumption \ref{ass:mainAssumption}, $M_d(\mu)$ induces an inner product on $\RR^{s(d)}$ and by extension on $\RR_d[X]$ (see Section \ref{sec:notations}). This inner product is denoted by $\left\langle\cdot,\cdot\right\rangle_{\mu}$ and satisfies for any polynomials $P,Q \in \RR_d[X]$ with coefficients $\B{p}, \B{q} \in \RR^{s(d)}$,
\begin{align*}
	\left\langle P, Q\right\rangle_\mu := \left\langle\B{p}, M_d(\mu)\B{q}\right\rangle_{\RR^{s(d)}} = \int_{\RR^p} P(\B{x})Q(\B{x}) d\mu(\B{x}).
\end{align*}
We will also use the canonical inner product over $\RR_d[X]$ which we write $\left\langle P, Q\right\rangle_{\RR_d[X]} := \left\langle\B{p}, \B{q}\right\rangle_{\RR^{s(d)}}$ for any polynomials $P,Q \in \RR_d[X]$ with coefficients $\B{p}, \B{q} \in \RR^{s(d)}$. We will omit the subscripts for this canonical inner product and use $\left\langle\cdot,\cdot\right\rangle$ for both products.

\subsection{Affine invariance}
It is worth noticing that the mapping $\B{x} \mapsto Q_{\mu,d}(\B{x})$ does not depend on the particular choice of $\B{v}_d(X)$ as a basis of $\RR_d[X]$, any other basis would lead to the same mapping. This leads to the result that $Q_{\mu,d}$ captures affine invariant properties of $\mu$.
\begin{lemma}
	\label{lem:affineInvariance}
	Let $\mu$ satisfy Assumption \ref{ass:mainAssumption} and $A \in \RR^{p\times p}, b\in \RR^p$ define an invertible affine mapping on $\RR^p$, $\mathcal{A}\colon\B{x} \to A \B{x} + b$. Then, the push foward measure, defined by $\tilde{\mu}(S) = \mu(\mathcal{A}^{-1}(S))$ for all Borel sets $S \subset \RR^p$, satisfies Assumption \ref{ass:mainAssumption} (with the same $d$ as $\mu$) and for all $\B{x} \in \RR^p$, $Q_{\mu,d}(\B{x}) = Q_{\tilde{\mu},d}(A\B{x} + b)$.
\end{lemma}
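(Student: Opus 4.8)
The plan is to exploit the basis-independence of the map $\B{x} \mapsto Q_{\mu,d}(\B{x})$, which is the key conceptual remark made just before the statement, and then track how an affine change of variables acts on the monomial basis. First I would record the two things that need checking: (a) that $\tilde{\mu}$ satisfies Assumption \ref{ass:mainAssumption} with the same $d$, i.e.\ its moments are finite and $M_d(\tilde{\mu}) \succ 0$; and (b) the identity $Q_{\mu,d}(\B{x}) = Q_{\tilde{\mu},d}(A\B{x}+b)$.

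For (a), finiteness of moments of $\tilde{\mu}$ is immediate from the change-of-variables formula $\int f \, d\tilde{\mu} = \int f(\mathcal{A}(\B{x})) \, d\mu(\B{x})$, since $f = \B{x}^{\B{\alpha}}$ composed with an affine map is again a polynomial, hence $\mu$-integrable. For positive definiteness of $M_d(\tilde{\mu})$: the monomial vector $\B{v}_d(A\B{x}+b)$ has entries that are polynomials in $\B{x}$ of degree $\le d$, so there is a matrix $C \in \RR^{s(d)\times s(d)}$ with $\B{v}_d(A\B{x}+b) = C\,\B{v}_d(\B{x})$ for all $\B{x}$. Because $\mathcal{A}$ is invertible, the inverse substitution gives $\B{v}_d(\B{x}) = C'\,\B{v}_d(A\B{x}+b)$ with $C'$ the matrix associated to $\mathcal{A}^{-1}$, and uniqueness of coefficients of polynomials forces $C'C = I$, so $C$ is invertible. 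Now, using the integral representation $M_d(\mu) = \int \B{v}_d(\B{x})\B{v}_d(\B{x})^T d\mu(\B{x})$ from Section \ref{sec:notations} together with the change-of-variables formula,
\begin{align*}
M_d(\tilde{\mu}) \;=\; \int_{\RR^p} \B{v}_d(\B{y})\B{v}_d(\B{y})^T \, d\tilde{\mu}(\B{y}) \;=\; \int_{\RR^p} \B{v}_d(A\B{x}+b)\,\B{v}_d(A\B{x}+b)^T \, d\mu(\B{x}) \;=\; C\,M_d(\mu)\,C^T.
\end{align*}
Since $M_d(\mu) \succ 0$ and $C$ is invertible, $M_d(\tilde{\mu}) = C M_d(\mu) C^T \succ 0$, establishing (a).

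For (b), invert the last display: $M_d(\tilde{\mu})^{-1} = C^{-T} M_d(\mu)^{-1} C^{-1}$. Then for any $\B{x} \in \RR^p$, substituting $\B{v}_d(A\B{x}+b) = C \B{v}_d(\B{x})$ into Definition \ref{def:SOSpoly},
\begin{align*}
Q_{\tilde{\mu},d}(A\B{x}+b) \;=\; \B{v}_d(A\B{x}+b)^T M_d(\tilde{\mu})^{-1} \B{v}_d(A\B{x}+b) \;=\; \B{v}_d(\B{x})^T C^T C^{-T} M_d(\mu)^{-1} C^{-1} C\, \B{v}_d(\B{x}) \;=\; Q_{\mu,d}(\B{x}),
\end{align*}
which is the claimed identity. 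I expect the only genuinely non-routine point to be the existence and invertibility of the change-of-basis matrix $C$ — that is, justifying cleanly that composing the monomial basis with an invertible affine map is again a linear bijection of $\RR_d[X]$ that preserves the degree filtration; once that is in hand, everything else is bookkeeping with the integral representation of the moment matrix and Schur-free linear algebra. One should also note in passing that $C$ depends only on $A$ and $b$ (not on $\mu$), which is what makes the argument work uniformly.
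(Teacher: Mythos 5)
Your proof is correct and follows essentially the same route as the paper: the paper isolates the basis-independence of $Q_{\mu,d}$ as a separate auxiliary lemma (with the identical computation $N_d(\mu)=CM_d(\mu)C^T$ and the cancellation $C^TC^{-T}\cdots C^{-1}C$) and then applies it to the basis $\B{x}\mapsto\B{v}_d(A\B{x}+b)$, whereas you inline the same argument. Your explicit justification that $C$ is invertible via the inverse affine substitution is a small refinement over the paper, which simply asserts this.
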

Lemma \ref{lem:affineInvariance} is probably better understood when $\mu = 1/n \sum_{i=1}^n \delta_{\B{x}_i}$ as in Section \ref{sec:empirical}. In this case, we have $\tilde{\mu} = 1/n \sum_{i=1}^n \delta_{A \B{x}_i + b}$ and Lemma \ref{lem:affineInvariance} asserts that the level sets of $Q_{\tilde{\mu},d}$ are simply the images of those of $Q_{\mu,d}$ under the affine transformation $\B{x} \mapsto A \B{x} + b$. This is illustrated in Appendix D. 
\subsection{Connection with orthogonal polynomials}
We define a classical \cite{szego1974orthogonal,dunkl2001orthogonal} family of orthonormal polynomials, $\left\{ P_\alpha \right\}_{\alpha \in \NN^p_d}$ ordered according to $\leqgl$ which satisfies for all $\alpha \in \NN^p_d$
\begin{equation}
\langle P_\alpha, X^\beta \rangle = 0\ {\rm if} \ \alpha \lgl \beta,\, \langle P_\alpha,P_\alpha\rangle_\mu=1,\,\langle P_\alpha, X^\beta \rangle_\mu = 0\ {\rm if} \ \beta \lgl \alpha,
\,\langle P_\alpha, X^\alpha \rangle_\mu > 0.
\label{eq:orthoDef}
\end{equation}
It follows from (\ref{eq:orthoDef}) that $\left\langle P_\alpha, P_\beta \right\rangle_\mu = 0$ if $\alpha \neq \beta$. Existence and uniqueness of such a family is guaranteed by the Gram-Schmidt orthonormalization process following the $\leqgl$ order on the monomials, and by the positivity of the moment matrix, see for instance \cite[Theorem 3.1.11]{dunkl2001orthogonal}.

Let $D_d(\mu)$ be the lower triangular matrix which rows are the coefficients of the polynomials $P_\alpha$ defined in (\ref{eq:orthoDef}) ordered by $\leqgl$. It can be shown that $D_d(\mu) = L_d(\mu)^{-T}$, where $L_d(\mu)$ is the Cholesky factorization of $M_d(\mu)$. Furthermore, there is a direct relation with the inverse moment matrix as $M_d(\mu)^{-1} = D_d(\mu)^T D_d(\mu)$ \cite[Proof of Theorem 3.1]{helton2008measure}. This has the following consequence.
\begin{lemma}
	\label{lem:connectionOrtho}
	Let $\mu$ satisfy Assumption \ref{ass:mainAssumption}, then $Q_{\mu,d} = \sum_{\alpha \in \NN^p_d} P_\alpha^2$, where the family $\left\{P_\alpha  \right\}_{\alpha \in \NN^p_d}$ is defined by (\ref{eq:orthoDef}) and $\int_{\RR^p} Q_{\mu,d}(\B{x})d\mu(\B{x}) = s(d)$.
\end{lemma}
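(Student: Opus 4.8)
The plan is to exploit the factorization $M_d(\mu)^{-1} = D_d(\mu)^T D_d(\mu)$ stated just before the lemma, where $D_d(\mu)$ is the lower-triangular matrix whose rows are the coefficient vectors of the orthonormal polynomials $P_\alpha$. First I would substitute this factorization into the definition \eqref{pol-def}: for any $\B{x} \in \RR^p$,
\begin{align*}
Q_{\mu,d}(\B{x}) = \B{v}_d(\B{x})^T M_d(\mu)^{-1} \B{v}_d(\B{x}) = \B{v}_d(\B{x})^T D_d(\mu)^T D_d(\mu) \B{v}_d(\B{x}) = \left\| D_d(\mu) \B{v}_d(\B{x}) \right\|_{\RR^{s(d)}}^2.
\end{align*}
Then I would observe that the $\alpha$-th coordinate of the vector $D_d(\mu)\B{v}_d(\B{x})$ is precisely $\langle \B{p}_\alpha, \B{v}_d(\B{x})\rangle = P_\alpha(\B{x})$, since the $\alpha$-th row of $D_d(\mu)$ is the coefficient vector $\B{p}_\alpha$ of $P_\alpha$ in the monomial basis $\B{v}_d(X)$. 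Hence $\left\| D_d(\mu) \B{v}_d(\B{x}) \right\|^2 = \sum_{\alpha \in \NN^p_d} P_\alpha(\B{x})^2$, which gives the first claim $Q_{\mu,d} = \sum_{\alpha \in \NN^p_d} P_\alpha^2$.

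For the second claim I would integrate this identity against $\mu$ and interchange the (finite) sum with the integral:
\begin{align*}
\int_{\RR^p} Q_{\mu,d}(\B{x})\, d\mu(\B{x}) = \sum_{\alpha \in \NN^p_d} \int_{\RR^p} P_\alpha(\B{x})^2\, d\mu(\B{x}) = \sum_{\alpha \in \NN^p_d} \langle P_\alpha, P_\alpha\rangle_\mu = \sum_{\alpha \in \NN^p_d} 1 = s(d),
\end{align*}
using the normalization $\langle P_\alpha, P_\alpha\rangle_\mu = 1$ from \eqref{eq:orthoDef} and the fact that $|\NN^p_d| = s(d)$ by definition of $s(d)$.

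\textbf{Where the work lies.} The only nontrivial ingredient is the identity $M_d(\mu)^{-1} = D_d(\mu)^T D_d(\mu)$ together with the precise statement that the rows of $D_d(\mu)$ are the coefficient vectors of the $P_\alpha$; but the excerpt supplies this (via the Cholesky factor $L_d(\mu)$, with $D_d(\mu) = L_d(\mu)^{-T}$, citing \cite{helton2008measure}), so the proof is essentially a one-line substitution followed by an elementary integration. If one did not want to invoke that identity as a black box, the alternative would be to verify directly from \eqref{eq:orthoDef} that the Gram matrix of $\{P_\alpha\}$ in the $\langle\cdot,\cdot\rangle$ (canonical) inner product equals $M_d(\mu)^{-1}$: since $\langle P_\alpha, P_\beta\rangle_\mu = \delta_{\alpha\beta}$ means $D_d(\mu) M_d(\mu) D_d(\mu)^T = I$, and $D_d(\mu)$ is invertible (being triangular with positive diagonal by the last condition in \eqref{eq:orthoDef}), one gets $M_d(\mu)^{-1} = D_d(\mu)^T D_d(\mu)$ immediately. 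Either way, there is no real obstacle; the content of the lemma is the clean reformulation of $Q_{\mu,d}$ as the sum of squares of the orthonormal polynomials, which is exactly what makes the connection to the Christoffel function transparent in the sequel.
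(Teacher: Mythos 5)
Your proof is correct and follows essentially the same route as the paper: both rest on the factorization $M_d(\mu)^{-1} = D_d(\mu)^T D_d(\mu)$ (which the paper, like your "alternative," derives from $D_d(\mu) M_d(\mu) D_d(\mu)^T = I$ and the invertibility of the triangular matrix $D_d(\mu)$), substitute it into Definition \ref{def:SOSpoly}, and obtain the average value from orthonormality of the $s(d)$ polynomials. Your write-up is in fact slightly more explicit than the paper's, which leaves the final substitution as "plugging this in definition 1."
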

That is, $Q_{\mu,d}$ is a very specific and distinguished SOS polynomial, the sum of squares of the orthonormal basis elements $\left\{P_\alpha  \right\}_{\alpha \in \NN_d^p}$ 
of $\mathbb{R}_d(X)$ (w.r.t. $\mu$). Furthermore, the average value of $Q_{\mu,d}$ with respect to $\mu$ is $s(d)$ which corresponds to the red level set in left panel of Figure \ref{fig:examples}.

\subsection{A variational formulation for the inverse moment matrix SOS polynomial}
In this section, we show that the family of polynomials $\left\{P_\alpha \right\}_{\alpha \in \NN^p_d}$ defined in (\ref{eq:orthoDef}) is the unique solution (up to a multiplicative constant) of a convex optimization problem over polynomials. This fact combined with Lemma \ref{lem:connectionOrtho} provides a mathematical rationale for the empirical observations outlined in Section \ref{sec:empirical}. Consider the following optimization problem.
\begin{align}
	\label{eq:optPoly}
	\min_{Q_\alpha, \theta_\alpha, \alpha \in \NN^p_d}\;&\;\frac{1}{2}\int_{\RR^p} \sum_{\alpha \in \NN^p_d} Q_{\alpha}(\B{x})^2 d\mu(\B{x})\\
	\mathrm{s.t.}\;&\; \B{q}_{\alpha\alpha}\, \geq \exp(\theta_{\alpha}),\quad\B{q}_{\alpha\beta} = 0,\quad \alpha, \beta \in \NN^p_d,\quad \alpha \lgl \beta,\quad\sum_{\alpha \in \NN^p_d} \theta_\alpha= 0,\nonumber
\end{align}
where $Q_\alpha(\B{x})=\sum_{\beta \in \NN_d^p} \B{q}_{\alpha\beta}\B{x}^\beta$, $\alpha\in\NN^p_d$.
We first comment on problem (\ref{eq:optPoly}). Let $P = \sum_{\alpha \in \NN^p_d} Q_{\alpha}^2$ be the SOS polynomial appearing in the objective function of (\ref{eq:optPoly}). The constraints of problem (\ref{eq:optPoly}) restrict $P$ to be in a certain set $S_d \subset \Sigma_d[X]$. With this notation, problem (\ref{eq:optPoly}) is reformulated as $\min_{P\in S_d}\int P d\mu$. Therefore problem (\ref{eq:optPoly}) balances two antagonist targets, on one hand the minimization of the average value of the SOS polynomial $P$ with respect to $\mu$, on the other hand the avoidance of the trivial polynomial, enforced by the constraint that $P \in S_d$. The constraints on $P$ are simple and natural, they ensure that $P$ is a sum of squares of polynomials $\left\{Q_\alpha  \right\}_{\alpha \in \NN_d^p}$, where the leading term of $Q_\alpha$ (according to the ordering $\leqgl$) is $\B{q}_{\alpha\alpha}\B{x}^\alpha$ with $\B{q}_{\alpha\alpha}>0$ (and hence does not vanish). Inversely, using Cholesky factorization, for any SOS polynomial $Q$ of degree $2d$ which coefficient matrix (see equation (\ref{eq:equivSOS})) is positive definite, there exists $a >0$ such that	$aQ \in S_d$. This suggests that $S_d$ is a quite general class of nonvanishing SOS polynomials. The following result, which gives a relation between $Q_{\mu,d}$ and solutions of (\ref{eq:optPoly}), uses a generalization of \cite[Theorem 3.1.2]{szego1974orthogonal} to several orthogonal polynomials of several variables.
\begin{theorem}
  \label{prop:mainRes}:
	Under Assumption \ref{ass:mainAssumption}, problem (\ref{eq:optPoly}) is a convex optimization problem with a unique optimal solution $(Q^*_\alpha,\theta_\alpha^*)$, which satisfies $Q^*_\alpha= \sqrt{\lambda}P_\alpha$, $\alpha\in\NN^p_d$, for some $\lambda>0$. In particular,
		the distinguished SOS  polynomial 
		$Q_{\mu,d} \,=\,\sum_{\alpha\in\NN^p_d}P_\alpha^2\,=\, \frac{1}{\lambda} \sum_{\alpha \in \NN^p_d} (Q^*_{\alpha})^2,$
		is (part of) the unique optimal solution of (\ref{eq:optPoly}).
\end{theorem}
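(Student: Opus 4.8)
The plan is to recognize problem \eqref{eq:optPoly} as a smooth convex problem in the coefficient vectors $\B{q}_\alpha$ and the auxiliary scalars $\theta_\alpha$, and to attack it via first-order (KKT) optimality conditions after reducing away the inequality constraints. First I would observe that the objective $\frac12\int\sum_\alpha Q_\alpha^2\,d\mu = \frac12\sum_\alpha \B{q}_\alpha^T M_d(\mu)\B{q}_\alpha$ is a strictly convex quadratic form on the subspace cut out by the linear equality constraints $\B{q}_{\alpha\beta}=0$ for $\alpha\lgl\beta$ (strictness uses $M_d(\mu)\succ0$ from Assumption \ref{ass:mainAssumption}), while the feasible set $\{\B{q}_{\alpha\alpha}\geq\exp(\theta_\alpha),\ \sum_\alpha\theta_\alpha=0\}$ is convex. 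Hence the problem is convex; existence of a minimizer would follow by a coercivity/compactness argument on the relevant slice, and uniqueness in the $Q_\alpha$'s will drop out of strict convexity of the quadratic in the $\B{q}$-variables once I show the $\theta$-part is pinned down.

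Next I would exploit the structure: for fixed feasible $(\theta_\alpha)$, the inner minimization over the $\B{q}_\alpha$ decouples across $\alpha$, and for each $\alpha$ one is minimizing $\frac12\B{q}_\alpha^T M_d(\mu)\B{q}_\alpha$ over the triangular subspace ($\B{q}_{\alpha\beta}=0$ for $\beta\lgl\alpha$ — equivalently the leading monomial is $X^\alpha$ and nothing $\leqgl$-smaller appears; one reads off from \eqref{eq:orthoDef} that this is exactly the support constraint defining the orthogonal family) subject to the single scalar lower bound $\B{q}_{\alpha\alpha}\geq\exp(\theta_\alpha)$. I expect the constraint $\B{q}_{\alpha\alpha}\geq\exp(\theta_\alpha)$ to be active at the optimum (pushing $\B{q}_{\alpha\alpha}$ down lowers the quadratic), so the inner solution is a scaled version of the minimizer of $\B{q}_\alpha^TM_d(\mu)\B{q}_\alpha$ subject to $\B{q}_{\alpha\alpha}=1$ and the triangularity — and that minimizer is, by the standard variational characterization of orthogonal polynomials (this is the multivariate generalization of \cite[Theorem 3.1.2]{szego1974orthogonal}), precisely a positive multiple of $P_\alpha$ as normalized in \eqref{eq:orthoDef}, with optimal value $1/\B{q}_{\alpha\alpha}(P_\alpha)^2$ up to the normalization bookkeeping; concretely $\min \B{q}_\alpha^TM_d(\mu)\B{q}_\alpha = \exp(2\theta_\alpha)/c_\alpha$ where $c_\alpha=\langle P_\alpha,X^\alpha\rangle_\mu^{\,2}$ is the squared leading coefficient normalizer. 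Substituting back, the outer problem becomes $\min_{\sum_\alpha\theta_\alpha=0}\ \frac12\sum_\alpha \exp(2\theta_\alpha)/c_\alpha$, a strictly convex problem in $(\theta_\alpha)$ on a hyperplane, whose unique solution (Lagrange multiplier for $\sum\theta_\alpha=0$, or AM–GM) satisfies $\exp(2\theta_\alpha)/c_\alpha \equiv \lambda$ for a common constant $\lambda>0$, i.e. all $Q_\alpha^*$ are the \emph{same} multiple $\sqrt\lambda$ of the respective $P_\alpha$. Finally, combining with Lemma \ref{lem:connectionOrtho} ($Q_{\mu,d}=\sum_\alpha P_\alpha^2$) gives $Q_{\mu,d}=\frac1\lambda\sum_\alpha (Q^*_\alpha)^2$, which is the claimed identity.

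The main obstacle I anticipate is the multivariate extension of the Szegő extremality lemma — in one variable the argument that $P_\alpha$ uniquely minimizes $\int Q^2 d\mu$ over monic polynomials of degree exactly $\alpha$ is classical, but here ``monic'' is replaced by the graded-lexicographic leading-term normalization and one must check that the orthogonality conditions in \eqref{eq:orthoDef} are exactly the stationarity conditions of the constrained quadratic, and that uniqueness is not lost (which again rests on $M_d(\mu)\succ0$). A secondary technical point is justifying the interchange of the inner and outer minimizations (valid since the feasible set is a product over $\alpha$ once $\theta$ is fixed, plus the single coupling constraint $\sum\theta_\alpha=0$ touches only the $\theta$'s), and verifying attainment so that ``$\min$'' is legitimate rather than ``$\inf$''. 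Once those are in hand, convexity, uniqueness, the form $Q^*_\alpha=\sqrt\lambda P_\alpha$, and the final representation of $Q_{\mu,d}$ all follow as above.
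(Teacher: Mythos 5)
Your proposal is correct in substance but organizes the argument differently from the paper's proof. The paper writes down the KKT system for the joint problem in $(\B{q}_\alpha,\theta_\alpha)$ all at once: stationarity with respect to $\B{q}_\alpha$ yields the orthogonality relations $\langle X^\beta,Q^*_\alpha\rangle_\mu=0$ for $\beta\lgl\alpha$ and $\langle X^\alpha,Q^*_\alpha\rangle_\mu=\lambda_\alpha>0$, stationarity with respect to $\theta_\alpha$ yields the equalization $\lambda_\alpha\exp(\theta^*_\alpha)=\lambda$ for all $\alpha$, and together with the support constraints these force $Q^*_\alpha=\sqrt{\lambda}P_\alpha$ by uniqueness of the orthonormal family; existence is then settled by exhibiting an explicit KKT point with $\lambda=\bigl(\prod_\alpha\B{p}_{\alpha\alpha}\bigr)^{-2/s(d)}$. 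You instead perform a bilevel reduction: partial minimization over $\B{q}_\alpha$ for fixed $\theta$, which decouples over $\alpha$ and is resolved by the multivariate Szeg\H{o} extremality lemma, followed by a strictly convex problem in $\theta$ on the hyperplane $\sum_\alpha\theta_\alpha=0$ whose stationarity equalizes the contributions $\int (Q^*_\alpha)^2\,d\mu$. This is essentially a formalization of the intuition the paper only sketches in prose after the theorem statement; it buys a cleaner separation of the two mechanisms at work (extremality of each $P_\alpha$, and the role of the $\theta$-constraint in equalizing the weights), at the price of having to prove the multivariate extremality lemma as a separate ingredient --- which the paper's monolithic KKT computation establishes implicitly in equations (9)--(12). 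Your treatment of existence (coercivity on the $\theta$-hyperplane plus attainment of each inner quadratic minimum) and uniqueness (strict convexity at both levels, then partial-minimization equivalence) is sound.

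Two bookkeeping slips, neither fatal. First, the triangularity constraint in (\ref{eq:optPoly}) is $\B{q}_{\alpha\beta}=0$ for $\alpha\lgl\beta$, i.e.\ no monomial $\leqgl$-\emph{larger} than $X^\alpha$ appears and $X^\alpha$ is the leading monomial of $Q_\alpha$; your parenthetical states the reversed condition ``$\B{q}_{\alpha\beta}=0$ for $\beta\lgl\alpha$'', though the Szeg\H{o}-type variational characterization you then invoke is the one matching the correct reading, so this is only a typo. Second, the inner optimal value is $\exp(2\theta_\alpha)/\B{p}_{\alpha\alpha}^2=\exp(2\theta_\alpha)\,\langle P_\alpha,X^\alpha\rangle_\mu^2$, which is the reciprocal of your $\exp(2\theta_\alpha)/c_\alpha$ with $c_\alpha=\langle P_\alpha,X^\alpha\rangle_\mu^2$; the constant does not affect the equalization argument, and solving $\exp(2\theta_\alpha)/\B{p}_{\alpha\alpha}^2\equiv\lambda$ together with $\sum_\alpha\theta_\alpha=0$ recovers exactly the paper's value of $\lambda$. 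Finally, for activity of $\B{q}_{\alpha\alpha}\geq\exp(\theta_\alpha)$ the clean justification is by scaling ($t\B{q}_\alpha$ with $t<1$ stays feasible and strictly decreases the objective), or by noting that the unconstrained minimizer over the triangular subspace is $0$, which is infeasible --- ``pushing $\B{q}_{\alpha\alpha}$ down'' alone does not obviously lower the quadratic.
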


Theorem \ref{prop:mainRes} states that up to the scaling factor $\lambda$, the distinguished SOS polynomial $Q_{\mu,d}$ is {\it the unique optimal solution of problem (\ref{eq:optPoly}).} A detailed proof is provided in the Appendix B and we only sketch the main ideas here. First, it is remarkable that for each fixed $\alpha\in\NN^p_d$ (and again up to a scaling factor) the polynomial $P_\alpha$ is the unique optimal solution of the problem: $\min_Q\:\left\{\:\int Q^2 d\mu\,:\:Q \in \RR_d[X],\,Q(\B{x})=\B{x}^\alpha+\sum_{\beta \lgl \alpha}\B{q}_\beta\,\B{x}^\beta\:  \right\}$.
		This fact is well-known in the univariate case \cite[Theorem 3.1.2]{szego1974orthogonal} and does not seem to have been exploited in the literature, at least for purposes similar to ours. So intuitively, $P_\alpha^2$ should be as close to $0$ as possible on the support of $\mu$. Problem (\ref{eq:optPoly}) has similar properties and
the constraint on the vector of weights $\theta$ enforces that, at an optimal solution, the contribution $\int (Q^*_\alpha)^2\,d\mu$ to the overall sum in the criterion is the same for all $\alpha$. Using Lemma \ref{lem:connectionOrtho} yields (up to a multiplicative constant) the polynomial $Q_{\mu,d}$. Other constraints
on $\theta$ would yield different weighted sum of the
squares $P_\alpha^2$. This will be a subject of further investigations.

To sum up, Theorem \ref{prop:mainRes} provides a rationale for our observations. Indeed when solving (\ref{eq:optPoly}), intuitively, $Q_{\mu,d}$ should be as close to $0$ as possible on average while remaining in a large class of nonvanishing SOS polynomials.

		\subsection{Christoffel function and outlier detection}
		The following result from \cite[Theorem 3.5.6]{dunkl2001orthogonal} draws a direct connection between $Q_{\mu,d}$ and the \textit{Chritoffel} function (the right hand side of (\ref{eq:christoffel})).
		\begin{theorem}[\cite{dunkl2001orthogonal}]
		\label{th:christoffel}
		Let Assumption \ref{ass:mainAssumption} hold and let $\bar{\B{x}}\in\RR^p$ be fixed, arbitrary. Then
		\begin{equation}
			\label{eq:christoffel}
			Q_{\mu,d}(\bar{\B{x}})^{-1}\,=\,\min_{P\in\RR_d[X]}\:\left\{\int_{\RR^p} P(\B{x})^2\,d\mu(\B{x}): P(\bar{\B{x}})\,=\,1 \right\}.
		\end{equation}
		\end{theorem}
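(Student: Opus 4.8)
\textbf{Proof plan for Theorem \ref{th:christoffel}.}

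The plan is to solve the constrained quadratic minimization on the right-hand side of (\ref{eq:christoffel}) explicitly using the moment matrix, and to recognize the optimal value as $Q_{\mu,d}(\bar{\B{x}})^{-1}$. First I would parametrize: write $P\in\RR_d[X]$ via its coefficient vector $\B{p}\in\RR^{s(d)}$, so that $P(\B{x})=\langle\B{p},\B{v}_d(\B{x})\rangle$ and, using the identity $M_d(\mu)=\int \B{v}_d(\B{x})\B{v}_d(\B{x})^T d\mu(\B{x})$ from Section \ref{sec:notations}, one has $\int P(\B{x})^2 d\mu(\B{x})=\B{p}^T M_d(\mu)\B{p}$. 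The constraint $P(\bar{\B{x}})=1$ becomes the single linear constraint $\langle\B{p},\B{v}_d(\bar{\B{x}})\rangle=1$. Hence (\ref{eq:christoffel}) is equivalent to minimizing $\B{p}^T M_d(\mu)\B{p}$ over the affine hyperplane $\{\B{p}:\langle\B{p},\B{v}_d(\bar{\B{x}})\rangle=1\}$.

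Next I would solve this finite-dimensional problem. Since $M_d(\mu)\succ0$ by Assumption \ref{ass:mainAssumption}, the objective is strictly convex and coercive on the (nonempty, because $\B{v}_d(\bar{\B{x}})\neq0$) affine constraint set, so a unique minimizer exists. Introducing a Lagrange multiplier $\nu$ for the equality constraint, the stationarity condition $2M_d(\mu)\B{p}=\nu\,\B{v}_d(\bar{\B{x}})$ gives $\B{p}=\tfrac{\nu}{2}M_d(\mu)^{-1}\B{v}_d(\bar{\B{x}})$; plugging into the constraint $\langle\B{p},\B{v}_d(\bar{\B{x}})\rangle=1$ yields $\tfrac{\nu}{2}=\bigl(\B{v}_d(\bar{\B{x}})^T M_d(\mu)^{-1}\B{v}_d(\bar{\B{x}})\bigr)^{-1}=Q_{\mu,d}(\bar{\B{x}})^{-1}$ by Definition \ref{def:SOSpoly}. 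The optimal value is then $\B{p}^T M_d(\mu)\B{p}=\tfrac{\nu}{2}\langle\B{p},\B{v}_d(\bar{\B{x}})\rangle=\tfrac{\nu}{2}=Q_{\mu,d}(\bar{\B{x}})^{-1}$, which is exactly the claimed identity.

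Alternatively, and perhaps more cleanly, I would derive the same conclusion from a Cauchy--Schwarz argument: for any feasible $P$ with coefficient vector $\B{p}$, write $1=\langle\B{p},\B{v}_d(\bar{\B{x}})\rangle=\langle M_d(\mu)^{1/2}\B{p},\,M_d(\mu)^{-1/2}\B{v}_d(\bar{\B{x}})\rangle$, so by Cauchy--Schwarz $1\le \bigl(\B{p}^T M_d(\mu)\B{p}\bigr)\bigl(\B{v}_d(\bar{\B{x}})^T M_d(\mu)^{-1}\B{v}_d(\bar{\B{x}})\bigr)=\bigl(\int P^2 d\mu\bigr)Q_{\mu,d}(\bar{\B{x}})$, giving the lower bound $\int P^2 d\mu\ge Q_{\mu,d}(\bar{\B{x}})^{-1}$; equality holds when $M_d(\mu)^{1/2}\B{p}$ is proportional to $M_d(\mu)^{-1/2}\B{v}_d(\bar{\B{x}})$, i.e.\ for the $\B{p}$ found above, so the bound is attained and the minimum equals $Q_{\mu,d}(\bar{\B{x}})^{-1}$.

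There is no serious obstacle here; the only point requiring a small amount of care is checking that the feasible set is nonempty and that the infimum is attained, both of which follow from $\B{v}_d(\bar{\B{x}})\neq0$ (its first coordinate is $1$) together with positive definiteness of $M_d(\mu)$. One may also optionally remark on the orthogonal-polynomial interpretation via Lemma \ref{lem:connectionOrtho}: the extremal $P$ is $Q_{\mu,d}(\bar{\B{x}})^{-1}\sum_{\alpha\in\NN^p_d}P_\alpha(\bar{\B{x}})P_\alpha$, the reproducing-kernel direction evaluated at $\bar{\B{x}}$, but this is not needed for the proof itself.
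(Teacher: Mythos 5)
Your proposal is correct. It differs from the paper's proof mainly in the choice of coordinates: the paper expands $P=\sum_{\alpha}a_\alpha P_\alpha$ in the orthonormal family of Lemma \ref{lem:connectionOrtho}, where the $\mu$-inner product becomes the Euclidean one, and then applies Cauchy--Schwarz to $1=P(\bar{\B{x}})=\sum_\alpha a_\alpha P_\alpha(\bar{\B{x}})$, identifying $\sum_\alpha P_\alpha(\bar{\B{x}})^2$ with $Q_{\mu,d}(\bar{\B{x}})$ via Lemma \ref{lem:connectionOrtho}; you instead stay in the monomial basis and either solve the equality-constrained quadratic program by Lagrange multipliers or apply Cauchy--Schwarz after splitting $1=\langle M_d(\mu)^{1/2}\B{p},\,M_d(\mu)^{-1/2}\B{v}_d(\bar{\B{x}})\rangle$. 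The two Cauchy--Schwarz arguments are the same inequality written in different bases, so the mathematical content is identical; what your version buys is self-containedness (no need for the orthonormal family or Lemma \ref{lem:connectionOrtho}, only positive definiteness of $M_d(\mu)$ and Definition \ref{def:SOSpoly}), while the paper's version buys the explicit link to orthogonal polynomials that the rest of Section \ref{sec:interpretation} is built on. Your attention to feasibility ($\B{v}_d(\bar{\B{x}})\neq 0$ since its first coordinate is $1$) and attainment (coercivity of a positive definite quadratic on a closed affine set) is a point the paper handles only implicitly through the equality case of Cauchy--Schwarz, so that is a welcome addition rather than a discrepancy.
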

		Theorem \ref{th:christoffel} provides a mathematical rationale for the use of $Q_{\mu,d}$ for outlier or novelty detection purposes. Indeed, from Lemma \ref{lem:connectionOrtho} and equation (\ref{eq:orthoDef}), we have $Q_{\mu,d} \geq 1$ on $\RR^p$. Furthermore, the solution of the minimization problem in (\ref{eq:christoffel}) satisfies $P(\bar{\B{x}})^2 = 1$ and $\mu\left( \left\{ \B{x} \in \RR^p:\: P(\B{x})^2 \leq 1 \right\} \right) \geq 1-Q_{\mu,d}(\bar{\B{x}})^{-1}$ (by Markov's inequality). Hence, for high values of $Q_{\mu,d}(\bar{\B{x}})$, the sublevel set 
		$\left\{ \B{x} \in \RR^p:\: P(\B{x})^2 \leq 1 \right\}$ contains most of the mass of $\mu$ while $P(\bar{\B{x}})^2 = 1$. 
Again the result of Theorem \ref{th:christoffel} does not seem to have been interpreted for purposes similar to ours.
\section{Experiments on network intrusion datasets}
\label{sec:experiments}

In addition to having its own mathematical interest, Theorem \ref{prop:mainRes} can be exploited for various purposes. For instance, the sub-level sets of $Q_{\mu,d}$, and in particular $\{\B{x}\in\mathbb{R}^{p}: Q_{\mu,d}(\B{x})\leq {p+d\choose d}\}$, can be used  to encode a cloud of points in a simple and compact form. However in this section we focus on another potential application in anomaly detection.

Empirical findings described in Section \ref{sec:empirical} suggest that the polynomial $Q_{\mu,d}$ can be used to detect outliers in a collection of real vectors by taking $\mu$ to be the corresponding empirical measure. This is backed up by the results presented in Section \ref{sec:interpretation}. In this section we illustrate these properties on a real world example. We choose the KDD cup 99 network intrusion dataset (available at \cite{lichmanUCI2013}) which consists of network connection data with labels describing whether they correspond to normal traffic or network intrusions. We follow \cite{yamanishi2004online} and \cite{williams2002comparative} and construct five datasets consisting of labeled vectors in $\RR^3$, the label indicating normal traffic or network attack. The content of these datasets is summarized in the following table. 
\begin{center}
	\begin{tabular}{c|c|c|c|c|c}
		Dataset&http&smtp&ftp-data&ftp&others\\\hline
		Number of examples&567498&95156&30464&4091&5858\\
		Proportions of attacks&0.004&0.0003&0.023&0.077&0.016
	\end{tabular}
\end{center}

The details on how these datasets are constructed are available in \cite{yamanishi2004online,williams2002comparative} and are reproduced in Appendix C. The main idea is to give to each datapoint an outlyingness score solely based on its position in $\RR^3$ and then compare outliers predicted by the score with the label indicating network intrusion. The underlying assumption is that network intrusion corresponds to infrequent abnormal behaviors and could thus be considered as outliers.

We reproduce the exact same experiment that was described in \cite[Section 5.4]{williams2002comparative} using the value of the inverse moment matrix SOS polynomial from Definition \ref{def:SOSpoly} as an outlyingness score (with $d=3$). The authors of \cite{williams2002comparative} have compared different types of methods for outlier detection in the same experimental setting: methods based on robust estimation and Mahalanobis distance \cite{hadi1994modification,knorr2001robust}, mixture model based methods \cite{olivier1996} and recurrent neural network based methods. These results are gathered in \cite[Figure 7]{williams2002comparative}. In the left panel of Figure \ref{fig:KDD} we represent the same performance measure for our approach. We first compute the value of the inverse moment SOS polynomial for each datapoint and use it as an outlyingness score. We then display the proportion of correctly identified outliers, with score above a given threshold, as a function of the proportion of examples with score above the threshold (for different values of the threshold). The main comments are as follows.

$\bullet$ The inverse moment matrix SOS polynomial does detect network intrusions with varying performances on the five datasets.

$\bullet$ Except for the ``ftp-data dataset'', the global shape of these curves are very similar to results reported in \cite[Figure 7]{williams2002comparative} indicating that the proposed approach is comparable to other dedicated methods for intrusion detection in these four datasets.

In a second experiment, we investigate the effect of changing the value of $d$ in $Q_{\mu,d}$ on the performances in terms of outlier detection. We focus on the ``others'' dataset because it is the most heterogeneous in term of data and outliers. We adopt a slightly different measure of performance and use precision recall curves (see for example \cite{davis2006relationship}) to measure performances in identifying network intrusions (the higher the curve, the better). We call the area under such curves the AUPR. The right panel of Figure \ref{fig:KDD} represents these results. First, the case $d = 1$, which corresponds to vanilla Mahalanobis distance as outlined in Section \ref{sec:maxLikelihood}, gives poor performances. Second, the global performances rapidly increase with $d$ and then decrease and stabilize.

This suggests that $d$ can be used as a tuning parameter which controls the ``complexity'' of $Q_{\mu,d}$. Indeed, $2d$ is the degree of the polynomial $Q_{\mu,d}$  and it is expected that more complex models will potentially identify more diverse classes of examples of points as outliers. In our case, this means identifying regular traffic as outliers while it actually does not correspond to intrusions.

\begin{figure}[t]
	\centering
	\includegraphics[width=.47\textwidth]{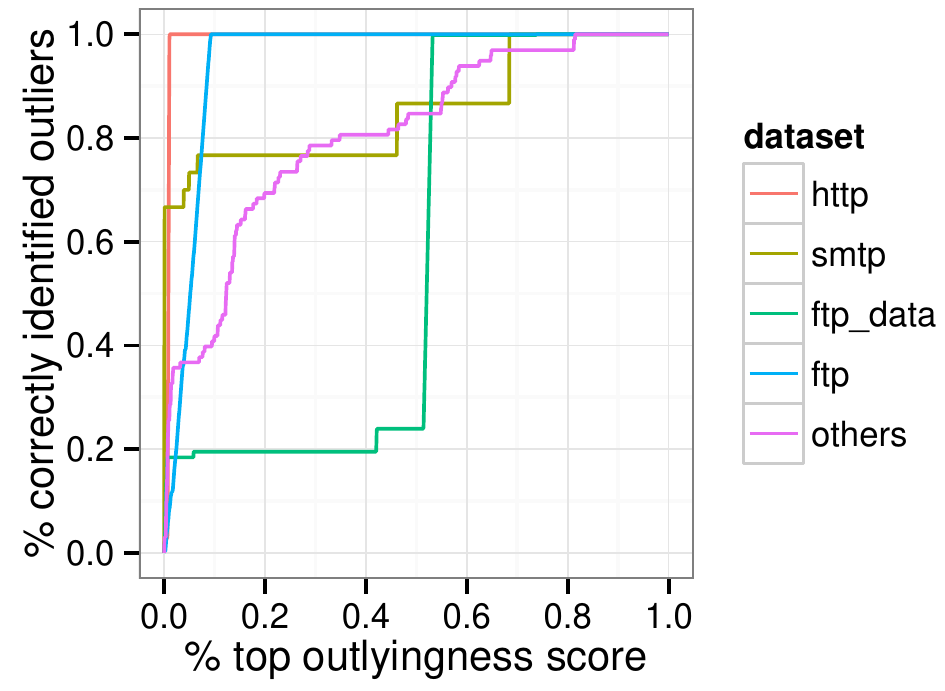}\quad
	\includegraphics[width=.47\textwidth]{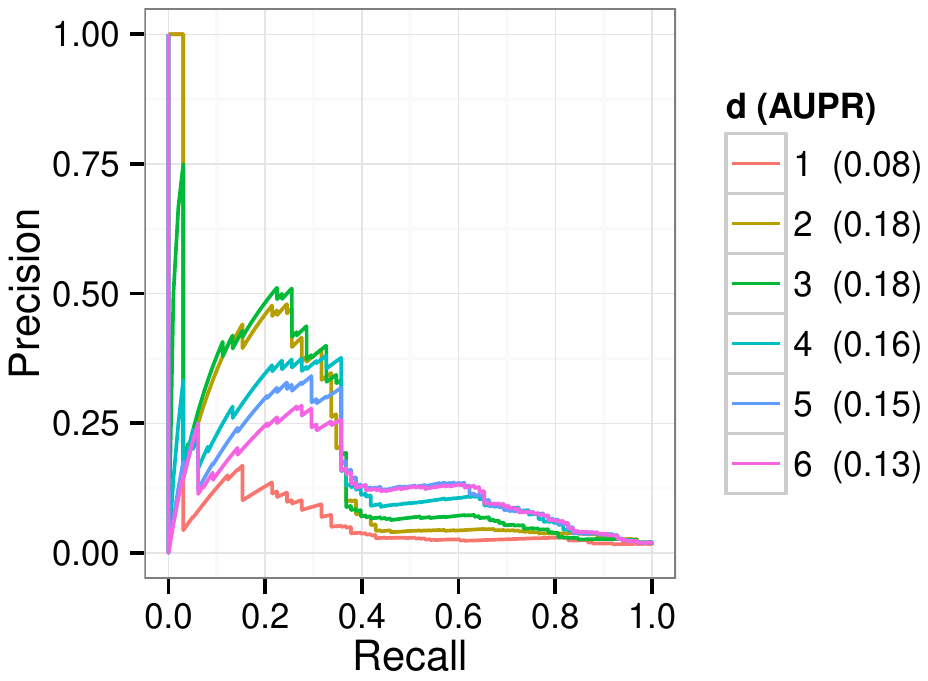}
	\caption{Left: reproduction of the results described in \cite{williams2002comparative} with the inverse moment SOS polynomial value as an outlyingness score ($d=3$). Right: Precision-recall curves for different values of $d$ for the dataset ``others''.}
	\label{fig:KDD}
\end{figure}
\section{Conclusion and future work}
We presented empirical findings with a mathematical intuition regarding the sublevel sets of the inverse moment matrix SOS polynomial. This opens many potential subjects of investigations.
\begin{itemize}
	\item Similarities with maximum likelihood. 
	\item Statistics in the context of empirical processes.
	\item Relation between a density and its inverse moment matrix SOS polynomial. Assymptotics when the degree increases.
	\item Connections with computational geometry and non Gaussian integrals.
	\item Computationally tractable extensions in higher dimensions.
\end{itemize}

\subsubsection*{Acknowledgments}
This work was partly supported by project ERC-ADG TAMING 666981,ERC-Advanced Grant of the {\it European Research Council} and grant number FA9550-15-1-0500 from the {\it Air Force Office of Scientific Research, Air Force Material Command}.

%

\bibliographystyle{plain}

\newpage
\appendix
\section{Additional examples}
\begin{figure}[h!]
	\centering
	\includegraphics[width=\textwidth]{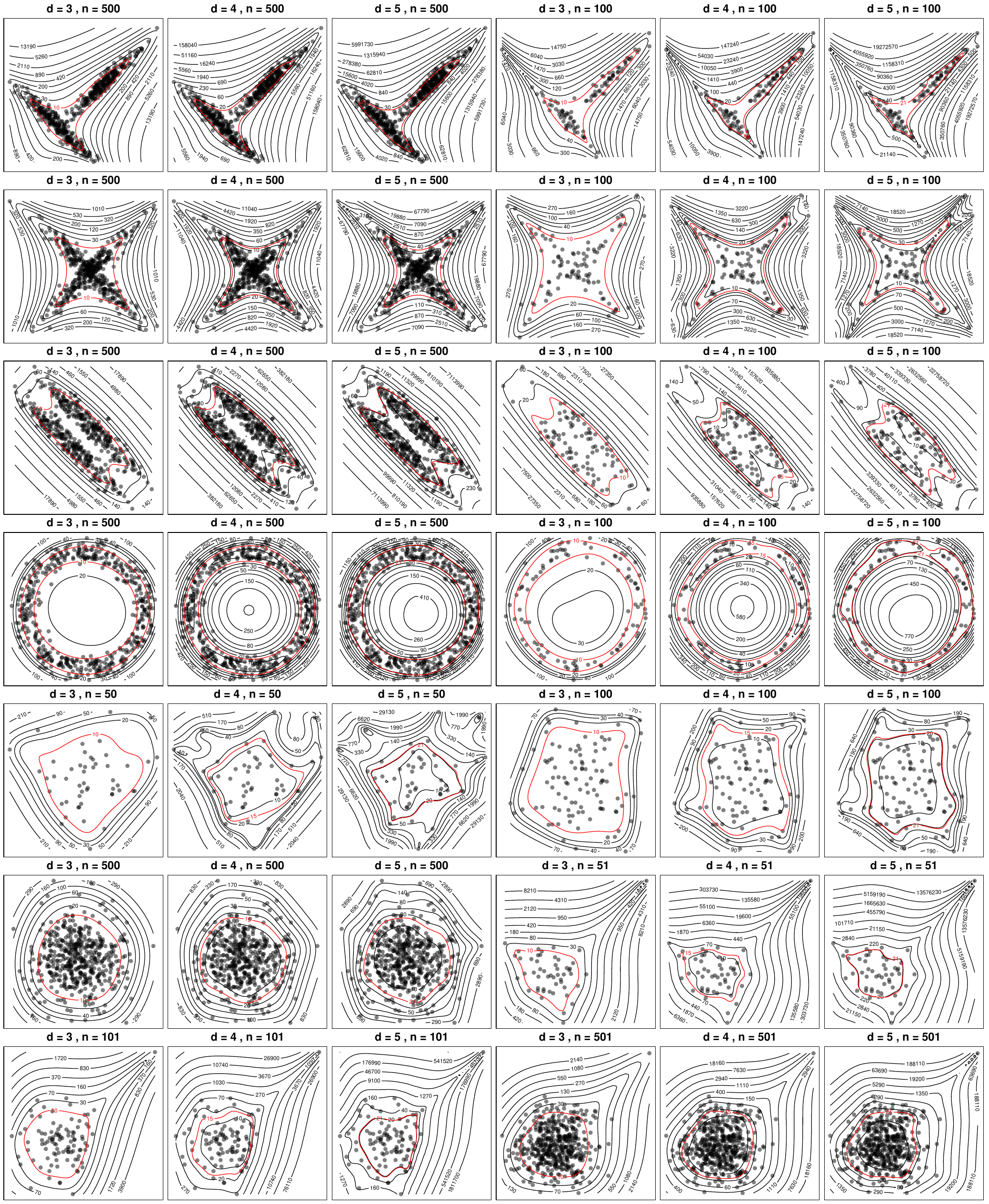}\quad\quad
	\caption{Empirical measure and level sets of $Q_{\mu,d}$ for various values of $d$ and various configurations and numbers of points $n$. The level set ${p+d \choose d}$, which corresponds to the average value of $Q_{\mu,d}$, is represented in red.}
	\label{fig:examples}
\end{figure}
\newpage
\section{Proofs}
\setcounter{equation}{6}
We use the same notation as in the main text. We recall that Assumption 1. 
\setcounter{assumption}{0}
\begin{assumption}
	\label{ass:mainAssumptionAppendix}
	$\mu$ is a Borel probability measure on $\RR^p$ with all its moments finite and $M_d(\mu)$ is positive definite for a given $d \in \NN$.
\end{assumption}
Lemma 2 and Theorem 2 are taken from the literature and we provide a proof for completeness.
\subsection{Proof of Lemma 1}
First we show that the mapping $\B{x} \mapsto Q_{\mu,d}(\B{x})$ does not depend on the choice of a specific basis of $\RR_d[X]$. Then we will deduce the affine invariance property.
\begin{lemma}
	\label{lem:auxLemma1Appendix}
	Let $\B{w}_d(X)$ be an arbitrary basis of $\RR_d[X]$ and let $R_{\mu,d} \in \RR_d[X]$ be derived in the same way as $Q_{\mu,d}$ (see Definition 1), with $\B{w}_d$ in place of $\B{v}_d$. Then $Q_{\mu,d}(\B{x}) = R_{\mu,d}(\B{x})$ for all $\B{x} \in \RR^p$.
\end{lemma}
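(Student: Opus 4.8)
The plan is to translate a change of basis into an invertible linear map acting on the monomial vector, and then to simply track how the moment matrix and the associated quadratic form transform. Since $\B{v}_d(X)$ and $\B{w}_d(X)$ are both bases of the $s(d)$-dimensional space $\RR_d[X]$, there is a unique matrix $C\in\RR^{s(d)\times s(d)}$, necessarily invertible, such that $\B{w}_d(X)=C\,\B{v}_d(X)$, where $C$ is understood to act entrywise on the vector of polynomials. Existence comes from expanding each entry of $\B{w}_d$ in the basis $\B{v}_d$, and invertibility from the fact that $\B{w}_d$ is itself a basis.

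First I would record that the object playing the role of the moment matrix in the basis $\B{w}_d$, namely $N_d(\mu):=\int_{\RR^p}\B{w}_d(\B{x})\,\B{w}_d(\B{x})^T\,d\mu(\B{x})$, satisfies $N_d(\mu)=C\,M_d(\mu)\,C^T$ by linearity of the integral. Because $M_d(\mu)\succ 0$ under Assumption \ref{ass:mainAssumptionAppendix} and $C$ is invertible, $N_d(\mu)\succ 0$ as well, so that $R_{\mu,d}(\B{x}):=\B{w}_d(\B{x})^T N_d(\mu)^{-1}\B{w}_d(\B{x})$ is well defined exactly as $Q_{\mu,d}$ in Definition \ref{def:SOSpoly}. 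The computation is then immediate: for every $\B{x}\in\RR^p$,
\[
R_{\mu,d}(\B{x})=(C\B{v}_d(\B{x}))^T\bigl(C M_d(\mu) C^T\bigr)^{-1}(C\B{v}_d(\B{x}))=\B{v}_d(\B{x})^T C^T C^{-T} M_d(\mu)^{-1} C^{-1} C\,\B{v}_d(\B{x})=\B{v}_d(\B{x})^T M_d(\mu)^{-1}\B{v}_d(\B{x})=Q_{\mu,d}(\B{x}).
\]

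There is essentially no obstacle; the only points requiring a line of justification are the existence and invertibility of $C$ (elementary linear algebra) and the positive definiteness of $N_d(\mu)$, which guarantees that the defining expression for $R_{\mu,d}$ makes sense. I would also note in passing that this covers any reordering of the monomials, since permuting the entries of $\B{v}_d$ is the special case in which $C$ is a permutation matrix; hence $Q_{\mu,d}$ does not depend on the choice of the ordering $\leqgl$ either. Finally, this basis-independence is exactly what is needed afterwards to deduce Lemma \ref{lem:affineInvariance}: one applies it with the basis $\B{w}_d(X):=\B{v}_d(AX+b)$ (a basis of $\RR_d[X]$ precisely because $\mathcal{A}$ is invertible), observes via the change-of-variables formula for the pushforward that $M_d(\tilde\mu)=N_d(\mu)$, and concludes $Q_{\tilde\mu,d}(A\B{x}+b)=R_{\mu,d}(\B{x})=Q_{\mu,d}(\B{x})$.
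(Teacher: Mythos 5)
Your proof is correct and follows essentially the same route as the paper's: write $\B{w}_d(X)=C\,\B{v}_d(X)$ for an invertible change-of-basis matrix $C$, deduce $N_d(\mu)=C M_d(\mu)C^T$, and cancel the factors of $C$ in the quadratic form. The extra remarks you add (positive definiteness of $N_d(\mu)$, independence of the monomial ordering) are harmless refinements of the same argument.
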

\begin{proof}
	Since $\B{w}_d$ is a basis of $\RR_d[X]$, there exists an invertible matrix $C \in \RR^{s(d) \times s(d)}$ such that $\B{w}_d(X) = C \B{v}_d(X)$. We reproduce the computation of Definition 1 with this new basis. We write $N_{d}(\mu)$ the moment matrix computed with the polynomial basis $\B{w}_d$. We have
	\begin{align*}
		N_d(\mu) &= \int_{\RR^p} \B{w}_d(\B{x})\B{w}_d(\B{x})^T d\mu(\B{x})\\
		&=\int_{\RR^p} C \B{v}_d(\B{x})\B{v}_d(\B{x})^TC^T d\mu(\B{x})\\
		&=C\int_{\RR^p} \B{v}_d(\B{x})\B{v}_d(\B{x})^T d\mu(\B{x})C^T\\
		&=CM_d(\mu)C^T,
	\end{align*}
	which leads to $N_{d}(\mu)^{-1} = C^{-T} M_d(\mu)^{-1} C^{-1}$. Using Definition 1, for all $\B{x} \in \RR^p$, we have
	\begin{align*}
		R_{\mu,d}(\B{x}) &= \B{w}_d(\B{x})^T N_{d}(\mu)^{-1}\B{w}_d(\B{x})\\ 
		&=\B{v}_d(\B{x})^TC^TC^{-T} M_d(\mu)^{-1} C^{-1}C \B{v}_d(\B{x})\\
		&=\B{v}_d(\B{x})^T M_d(\mu)^{-1} \B{v}_d(\B{x})\\
		&= Q_{\mu,d}(\B{x}),
	\end{align*}
	which concludes the proof.
\end{proof}
\setcounter{lemma}{0}
\begin{lemma}
	\label{lem:affineInvarianceAppendix}
	Let $\mu$ satisfy Assumption \ref{ass:mainAssumptionAppendix} and $A \in \RR^{p\times p}, b\in \RR^p$ define an invertible affine mapping on $\RR^p$, $\mathcal{A}\colon\B{x} \to A \B{x} + b$. Then, the push foward measure, defined by $\tilde{\mu}(S) = \mu(\mathcal{A}^{-1}(S))$ for all Borel sets $S \subset \RR^p$, satisfies Assumption \ref{ass:mainAssumptionAppendix} (with the same $d$ as $\mu$) and for all $\B{x} \in \RR^p$, $Q_{\mu,d}(\B{x}) = Q_{\tilde{\mu},d}(A\B{x} + b)$.
\end{lemma}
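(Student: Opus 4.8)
The plan is to reduce this statement to the basis-independence property already established in Lemma~\ref{lem:auxLemma1Appendix}. The underlying observation is that replacing $\mu$ by its pushforward $\tilde\mu$ under $\mathcal{A}$, and then evaluating $Q_{\tilde\mu,d}$ along $\mathcal{A}$, corresponds on the level of moment matrices to nothing more than a change of polynomial basis in $\RR_d[X]$.

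First I would observe that the map $P\mapsto P\circ\mathcal{A}$ is a linear bijection of $\RR_d[X]$: composing with the degree-one map $\mathcal{A}$ does not raise the degree, and the inverse is $P\mapsto P\circ\mathcal{A}^{-1}$ (here $\mathcal{A}^{-1}\colon\B{x}\mapsto A^{-1}(\B{x}-b)$ is again affine since $A$ is invertible). Hence the vector of polynomials $\B{w}_d(X):=\B{v}_d(\mathcal{A}(X))=\B{v}_d(AX+b)$ is again a basis of $\RR_d[X]$, so there is an invertible matrix $C\in\RR^{s(d)\times s(d)}$ with $\B{w}_d(X)=C\,\B{v}_d(X)$.

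Next I would compute the moment matrix of $\tilde\mu$ using the change-of-variables identity for the pushforward, namely $\int f\,d\tilde\mu=\int f\circ\mathcal{A}\,d\mu$ for Borel $f$. Applied entrywise this gives
\[
M_d(\tilde\mu)=\int_{\RR^p}\B{v}_d(\B{y})\B{v}_d(\B{y})^T\,d\tilde\mu(\B{y})=\int_{\RR^p}\B{v}_d(\mathcal{A}(\B{x}))\B{v}_d(\mathcal{A}(\B{x}))^T\,d\mu(\B{x})=C\,M_d(\mu)\,C^T ,
\]
where the last equality uses $\B{w}_d=C\B{v}_d$ and pulls the constant matrix $C$ out of the integral. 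This already yields that $\tilde\mu$ satisfies Assumption~\ref{ass:mainAssumptionAppendix}: each moment of $\tilde\mu$ is a finite linear combination of moments of $\mu$ (expand $(A\B{x}+b)^\alpha$), hence finite; and $M_d(\tilde\mu)=C M_d(\mu)C^T\succ 0$ because $M_d(\mu)\succ0$ and $C$ is invertible. Moreover $C M_d(\mu)C^T$ is precisely the matrix $N_d(\mu)$ that appears in the proof of Lemma~\ref{lem:auxLemma1Appendix} for the basis $\B{w}_d$.

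Finally I would conclude: for every $\B{x}\in\RR^p$,
\[
Q_{\tilde\mu,d}(\mathcal{A}(\B{x}))=\B{v}_d(\mathcal{A}(\B{x}))^T M_d(\tilde\mu)^{-1}\B{v}_d(\mathcal{A}(\B{x}))=\B{w}_d(\B{x})^T N_d(\mu)^{-1}\B{w}_d(\B{x})=R_{\mu,d}(\B{x})=Q_{\mu,d}(\B{x}),
\]
the last equality being exactly Lemma~\ref{lem:auxLemma1Appendix}. I do not expect a real obstacle: the only non-bookkeeping point is the claim that $\B{w}_d$ is a basis (equivalently that $C$ is invertible), and that follows immediately from degrees being preserved under composition with the invertible affine map $\mathcal{A}$; everything else is routine manipulation of the pushforward change-of-variables formula.
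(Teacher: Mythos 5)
Your proposal is correct and follows essentially the same route as the paper's own proof: both reduce the statement to the basis-independence property of Lemma~\ref{lem:auxLemma1Appendix} by observing that $\B{w}_d(X)=\B{v}_d(AX+b)$ is a basis of $\RR_d[X]$ and that the pushforward change-of-variables identity turns $M_d(\tilde\mu)$ into the moment matrix computed in that basis. Your version merely makes explicit the identity $M_d(\tilde\mu)=C\,M_d(\mu)\,C^T$ and the resulting positive-definiteness, which the paper leaves implicit.
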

\begin{proof}
	Let us first compute $M_{d}(\tilde{\mu})$. For the push forward measure $\tilde{\mu}$, it holds that for any $\mu$ integrable function $f \colon \RR^p \to \RR$, 
	\begin{align*}
		\int_{\RR^p} f(\B{x}) d\tilde{\mu}(\B{x}) = \int_{\RR^p} f(A\B{x} + b) d\mu(\B{x}).
	\end{align*}
	By considering polynomial $f$, we have that $\tilde{\mu}$ has all its moments finite and satisfies Assumption \ref{ass:mainAssumptionAppendix} with the same $d$ as $\mu$. Furthermore, we have
	\begin{align}
		\label{eq:affineInv1}
		M_{d}(\tilde{\mu}) &= \int_{\RR^p} \B{v}_d(\B{x})\B{v}_d(\B{x})^T d\tilde{\mu}(\B{x}) = \int_{\RR^p} \B{v}_d(A\B{x} + b)\B{v}_d(A\B{x} + b)^T d\mu(\B{x}).
	\end{align}
	We can deduce the following identity for all $\B{x} \in \RR^p$,
	\begin{align}
		\label{eq:affineInv2}
		Q_{\tilde{\mu},d}(A\B{x} + b) = \B{v}_d(A\B{x} + b)^T M_{d}(\tilde{\mu})^{-1}\,\B{v}_d(A\B{x} + b).
	\end{align}
	It remains to notice that mappings defined by $\B{w}_d(\B{x}) = \B{v}_d(A\B{x} + b)$ for all $\B{x} \in \RR^p$ form a basis of the polynomials of degree up to $d$ on $\RR^p$ (by invertibility of the affine mapping). Combining (\ref{eq:affineInv1}) and (\ref{eq:affineInv2}), we see that $\B{x} \mapsto \B{v}_d(A\B{x} + b)$ simply corresponds to the use of a different basis of $\RR_d[X]$. The result follows by applying Lemma \ref{lem:auxLemma1Appendix} and the proof is complete.
\end{proof}
\subsection{Proof of Lemma 2}
Recall that the orthogonal polynomials satisfy for all $\alpha \in \NN_d^p$
\begin{equation}
	\tag{3}
\langle P_\alpha, X^\beta \rangle = 0\ {\rm if} \ \alpha \lgl \beta,\, \langle P_\alpha,P_\alpha\rangle_\mu=1,\,\langle P_\alpha, X^\beta \rangle_\mu = 0\ {\rm if} \ \beta \lgl \alpha,
\,\langle P_\alpha, X^\alpha \rangle_\mu > 0.
\label{eq:orthoDefAppendix}
\end{equation}
\begin{lemma}
	\label{lem:connectionOrthoAppendix}
	Let $\mu$ satisfy Assumption \ref{ass:mainAssumptionAppendix}, then $Q_{\mu,d} = \sum_{\alpha \in \NN^p_d} P_\alpha^2$, where the family $\left\{P_\alpha  \right\}_{\alpha \in \NN^p_d}$ is defined by (\ref{eq:orthoDefAppendix}) and $\int_{\RR^p} Q_{\mu,d}(\B{x})d\mu(\B{x}) = s(d)$.
\end{lemma}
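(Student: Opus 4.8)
The plan is to read off both assertions from the factorization $M_d(\mu)^{-1}=D_d(\mu)^T D_d(\mu)$ already recorded above, where the rows of $D_d(\mu)$ are the coefficient vectors of the orthonormal polynomials $\{P_\alpha\}_{\alpha\in\NN^p_d}$. This identity is itself a consequence of (\ref{eq:orthoDefAppendix}): running Gram--Schmidt on the monomials in the order $\leqgl$ expresses each $P_\alpha$ as a linear combination of the $X^\beta$ with $\beta\leqgl\alpha$, so $D_d(\mu)$ is lower triangular; and the relations $\langle P_\alpha,P_\alpha\rangle_\mu=1$ together with $\langle P_\alpha,P_\beta\rangle_\mu=0$ for $\alpha\neq\beta$ translate, in coefficient form, into $D_d(\mu)\,M_d(\mu)\,D_d(\mu)^T=I_{s(d)}$. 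Since $M_d(\mu)\succ 0$ by Assumption~\ref{ass:mainAssumptionAppendix}, $D_d(\mu)$ is invertible and this rearranges to $M_d(\mu)^{-1}=D_d(\mu)^T D_d(\mu)$.

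First I would prove $Q_{\mu,d}=\sum_{\alpha\in\NN^p_d}P_\alpha^2$. Fix $\B{x}\in\RR^p$. By Definition~\ref{def:SOSpoly} and the factorization,
\[
Q_{\mu,d}(\B{x})\;=\;\B{v}_d(\B{x})^T M_d(\mu)^{-1}\B{v}_d(\B{x})\;=\;\B{v}_d(\B{x})^T D_d(\mu)^T D_d(\mu)\B{v}_d(\B{x})\;=\;\bigl\|D_d(\mu)\B{v}_d(\B{x})\bigr\|^2 .
\]
The $\alpha$-th entry of the vector $D_d(\mu)\B{v}_d(\B{x})$ is the inner product of the $\alpha$-th row of $D_d(\mu)$ --- i.e. the coefficient vector of $P_\alpha$ --- with the monomial vector $\B{v}_d(\B{x})$, which is exactly $P_\alpha(\B{x})$. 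Hence $\bigl\|D_d(\mu)\B{v}_d(\B{x})\bigr\|^2=\sum_{\alpha\in\NN^p_d}P_\alpha(\B{x})^2$, and since $\B{x}$ was arbitrary this is the claimed polynomial identity.

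For the average value I would integrate the identity just obtained against $\mu$ and use $\langle P_\alpha,P_\alpha\rangle_\mu=1$ from (\ref{eq:orthoDefAppendix}):
\[
\int_{\RR^p}Q_{\mu,d}(\B{x})\,d\mu(\B{x})\;=\;\sum_{\alpha\in\NN^p_d}\int_{\RR^p}P_\alpha(\B{x})^2\,d\mu(\B{x})\;=\;\sum_{\alpha\in\NN^p_d}\langle P_\alpha,P_\alpha\rangle_\mu\;=\;s(d).
\]
Equivalently, one can skip the expansion and compute $\int_{\RR^p}\B{v}_d(\B{x})^T M_d(\mu)^{-1}\B{v}_d(\B{x})\,d\mu(\B{x})=\mathrm{trace}\bigl(M_d(\mu)^{-1}\int_{\RR^p}\B{v}_d(\B{x})\B{v}_d(\B{x})^T d\mu(\B{x})\bigr)=\mathrm{trace}\bigl(M_d(\mu)^{-1}M_d(\mu)\bigr)=s(d)$, using the identity $M_d(\mu)=\int_{\RR^p}\B{v}_d(\B{x})\B{v}_d(\B{x})^T d\mu(\B{x})$ with linearity of the trace and of the integral. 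There is no genuine obstacle in this lemma: both claims are immediate once the factorization $M_d(\mu)^{-1}=D_d(\mu)^T D_d(\mu)$ is in place, and its only mildly delicate ingredient --- that Gram--Schmidt along $\leqgl$ yields a triangular change of basis matching the (inverse) Cholesky factor of $M_d(\mu)$ --- is already stated, with references, in the main text and may be taken as given.
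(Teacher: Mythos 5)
Your proposal is correct and follows essentially the same route as the paper: derive $M_d(\mu)^{-1}=D_d(\mu)^T D_d(\mu)$ from $D_d(\mu)M_d(\mu)D_d(\mu)^T=I$, substitute into Definition~\ref{def:SOSpoly} to get the sum-of-squares identity, and obtain the average value from orthonormality. Your write-up just spells out the intermediate steps (and adds an optional trace computation) that the paper leaves implicit.
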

\begin{proof}
Let $D_d(\mu)$ be the lower triangular matrix which rows are the coefficients of the polynomials $P_\alpha$ defined in (\ref{eq:orthoDefAppendix}) ordered by $\leqgl$. From properties in (\ref{eq:orthoDefAppendix}), $D_d(\mu)$ is lower triangular with positive coefficients on its diagonal and therefore invertible. We have $D_d(\mu) M_d(\mu) D_d(\mu)^T = I$, the identity. It follows that $M_d(\mu) = D_d(\mu)^{-1} D_d(\mu)^{-T}$ and $M_d(\mu)^{-1} = D_d(\mu)^T D_d(\mu)$. Plugging this in definition 1 and using equation (1) leads to the desired identity. The average value result follows because we manipulate an orthonormal basis of $s(d)$ polymials, each of which has a square average value (with respect to $\mu$) equal to $1$.
\end{proof}
\subsection{Proof of Theorem 1}
 We recall the the optimization problem.
\begin{align}
	\label{eq:optPolyAppendix}
	\tag{4}
	\min_{Q_\alpha, \theta_\alpha, \alpha \in \NN^p_d}\;&\;\frac{1}{2}\int_{\RR^p} \sum_{\alpha \in \NN^p_d} Q_{\alpha}(\B{x})^2 d\mu(\B{x})\\
	\mathrm{s.t.}\;&\; \B{q}_{\alpha\alpha}\, \geq \exp(\theta_{\alpha}),\quad \alpha \in \NN^p_d,\nonumber\\
	&\; \B{q}_{\alpha\beta} = 0,\, \alpha \lgl \beta,\quad \alpha, \beta \in \NN^p_d,\nonumber\\
	&\;\sum_{\alpha \in \NN^p_d} \theta_\alpha= 0.\nonumber
\end{align}
where $Q_\alpha(\B{x})=\sum_\beta \B{q}_{\alpha\beta}\B{x}^\beta$, $\alpha\in\NN^p_d$. The statement of Theorem 1 goes as follows.
\setcounter{theorem}{0}
	\begin{theorem}
  \label{prop:mainRes}:
	Problem (\ref{eq:optPolyAppendix}) is a convex optimization problem with a unique optimal solution $(Q^*_\alpha,\theta_\alpha^*)$, which satisfies $Q^*_\alpha= \sqrt{\lambda}P_\alpha$, $\alpha\in\NN^p_d$, for some $\lambda>0$. In particular,
		the distinguished SOS  polynomial 
		\[Q_{\mu,d} \,=\,\sum_{\alpha\in\NN^p_d}P_\alpha^2\,=\, \frac{1}{\lambda} \sum_{\alpha \in \NN^p_d} (Q^*_{\alpha})^2,\]
		is (part of) the unique optimal solution of (\ref{eq:optPolyAppendix}).
\end{theorem}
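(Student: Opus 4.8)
The plan is to reformulate (\ref{eq:optPolyAppendix}) so that convexity is manifest, then exploit strict convexity together with a symmetry/scaling argument to pin down the unique optimizer. First I would observe that the objective $\frac{1}{2}\int \sum_\alpha Q_\alpha(\B{x})^2\,d\mu(\B{x}) = \frac12\sum_\alpha \langle Q_\alpha,Q_\alpha\rangle_\mu = \frac12\sum_\alpha \B{q}_\alpha^T M_d(\mu)\B{q}_\alpha$ is a positive-definite quadratic form in the concatenated coefficient vector $(\B{q}_\alpha)_{\alpha}$ (using Assumption \ref{ass:mainAssumptionAppendix}, $M_d(\mu)\succ0$), hence strictly convex in those variables. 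The constraints $\B{q}_{\alpha\beta}=0$ for $\alpha\lgl\beta$ are linear; the constraint $\sum_\alpha\theta_\alpha=0$ is linear; and $\B{q}_{\alpha\alpha}\ge\exp(\theta_\alpha)$ defines a convex set (the $\theta_\alpha$-section of the hypograph of a concave function / equivalently $\exp(\theta_\alpha)-\B{q}_{\alpha\alpha}\le0$ with $\exp$ convex). So (\ref{eq:optPolyAppendix}) is a convex program. One should check a feasible point exists and the infimum is attained: feasibility is clear (take $Q_\alpha = X^\alpha$, $\theta_\alpha = 0$ — wait, need $\sum\theta_\alpha=0$, which holds; and $\B{q}_{\alpha\alpha}=1\ge e^0$); attainment follows because the sublevel sets are bounded in the $\B{q}$ variables (coercivity from $M_d(\mu)\succ 0$ on the relevant subspace, noting $\B{q}_{\alpha\alpha}\ge\exp(\theta_\alpha)>0$ keeps $\theta_\alpha$ bounded above, and $\sum\theta_\alpha=0$ then bounds each $\theta_\alpha$ below).

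Next I would write the Lagrangian/KKT conditions. Since the $Q_\alpha$-coupling in the objective is blockwise (no cross terms between different $\alpha$), and the only constraint linking different $\alpha$'s is $\sum_\alpha\theta_\alpha=0$, the stationarity condition for $\B{q}_\alpha$ decouples: for each $\alpha$, $\B{q}_\alpha$ must minimize $\frac12\B{q}_\alpha^T M_d(\mu)\B{q}_\alpha$ subject to $\B{q}_{\alpha\beta}=0$ for $\beta\gl\alpha$ and $\B{q}_{\alpha\alpha}= c_\alpha$ for the optimal value $c_\alpha=\exp(\theta_\alpha^*)$ of that leading coefficient (the inequality $\B{q}_{\alpha\alpha}\ge\exp(\theta_\alpha)$ must be active at the optimum, else one could decrease $\B{q}_{\alpha\alpha}$ strictly decreasing the objective while keeping feasibility). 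This is exactly the classical extremal problem $\min\{\int Q^2 d\mu : Q=c_\alpha X^\alpha + \sum_{\beta\lgl\alpha}\B{q}_\beta X^\beta\}$, whose unique solution — by the generalization of \cite[Theorem 3.1.2]{szego1974orthogonal} — is the rescaled orthogonal polynomial $Q_\alpha^* = \frac{c_\alpha}{\langle P_\alpha,X^\alpha\rangle}\,P_\alpha =: t_\alpha P_\alpha$ with $t_\alpha>0$, and $\int (Q_\alpha^*)^2 d\mu = t_\alpha^2$. I'd prove this sub-result from orthogonality: any feasible $Q$ decomposes as $Q = t_\alpha P_\alpha + R$ with $R\in\mathrm{span}\{X^\beta:\beta\lgl\alpha\}\subset\mathrm{span}\{P_\beta:\beta\lgl\alpha\}$ orthogonal to $P_\alpha$ w.r.t. $\mu$, whence $\int Q^2 d\mu = t_\alpha^2 + \|R\|_\mu^2$, minimized uniquely at $R=0$.

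Finally I would optimize over $\theta$. Writing $t_\alpha$ in terms of $\theta_\alpha$: from $c_\alpha=\exp(\theta_\alpha)$ and $t_\alpha = c_\alpha/\langle P_\alpha,X^\alpha\rangle$, and letting $a_\alpha:=\langle P_\alpha, X^\alpha\rangle^{-1}>0$ so $t_\alpha = a_\alpha e^{\theta_\alpha}$, the reduced objective is $\frac12\sum_\alpha t_\alpha^2 = \frac12\sum_\alpha a_\alpha^2 e^{2\theta_\alpha}$ to be minimized over $\sum_\alpha\theta_\alpha=0$. This is a strictly convex problem with a unique minimizer, found by Lagrange multipliers: $a_\alpha^2 e^{2\theta_\alpha^*}$ is constant in $\alpha$ (this is the ``equal contribution'' statement in the sketch). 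Hence $t_\alpha^2 = \lambda$ is independent of $\alpha$ for some $\lambda>0$, i.e. $Q_\alpha^* = \sqrt{\lambda}\,P_\alpha$ (choosing the positive square root, consistent with $\B{q}_{\alpha\alpha}^*>0$). Uniqueness of the whole solution $(Q_\alpha^*,\theta_\alpha^*)$ then follows by combining the uniqueness at each stage. The claimed identity $Q_{\mu,d} = \sum_\alpha P_\alpha^2 = \frac1\lambda\sum_\alpha (Q_\alpha^*)^2$ is then immediate from Lemma \ref{lem:connectionOrthoAppendix}.

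The main obstacle I anticipate is making the ``activeness of $\B{q}_{\alpha\alpha}\ge\exp(\theta_\alpha)$'' and the decoupling argument fully rigorous while the variables $\theta$ still couple the blocks — one has to be careful that reducing $\B{q}_{\alpha\alpha}$ to the active constraint does not interact badly with the $\sum\theta_\alpha=0$ constraint, which is why the cleanest route is the two-stage reduction (first solve each inner extremal problem with $c_\alpha$ fixed, obtaining a clean reduced problem in $\theta$ alone, then solve that), rather than attacking the joint KKT system directly. A secondary technical point is verifying that the generalized Szegő extremal result genuinely holds in the several-variables / graded-lexicographic setting used here; this is where the orthogonality relations (\ref{eq:orthoDefAppendix}), in particular $\langle P_\alpha, X^\beta\rangle_\mu = 0$ for $\beta\lgl\alpha$, do all the work.
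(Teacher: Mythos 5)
Your proposal is correct, and it takes a genuinely different route from the paper. The paper attacks the joint KKT system directly: it verifies Slater's condition, writes out stationarity and complementarity for all variables at once, shows that any KKT point forces the $Q^*_\alpha$ to be a family of orthogonal polynomials with $\int (Q^*_\alpha)^2 d\mu = \lambda$ for all $\alpha$ (hence $Q^*_\alpha=\sqrt{\lambda}P_\alpha$ by uniqueness of the orthonormal family), then exhibits an explicit primal--dual assignment (including the value $\lambda=(\prod_\alpha \B{p}_{\alpha\alpha})^{-2/s(d)}$) and checks uniqueness. Your two-stage partial minimization instead exploits that, for fixed $\theta$, the problem decouples into $s(d)$ independent Szeg\H{o}-type extremal problems, each solved in closed form by the rescaled orthogonal polynomial $t_\alpha P_\alpha$ with value $t_\alpha^2$; the reduced problem $\min\{\frac12\sum_\alpha a_\alpha^2 e^{2\theta_\alpha}:\sum_\alpha\theta_\alpha=0\}$ is then an elementary strictly convex program whose stationarity condition is exactly the ``equal contribution'' statement. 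What your route buys is that existence and uniqueness fall out of strict convexity at each stage without having to guess dual multipliers, and the role of the constraint $\sum_\alpha\theta_\alpha=0$ becomes transparent; what the paper's route buys is the explicit dual certificate. Both ultimately rest on the same generalized Szeg\H{o} extremality, which your inner-problem decomposition $Q=t_\alpha P_\alpha+R$, $R\perp_\mu P_\alpha$, proves correctly from (\ref{eq:orthoDefAppendix}).

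One step deserves tightening: your parenthetical justification that the constraint $\B{q}_{\alpha\alpha}\geq\exp(\theta_\alpha)$ is active --- ``else one could decrease $\B{q}_{\alpha\alpha}$ strictly decreasing the objective'' --- is not quite right as stated, since decreasing the single coordinate $\B{q}_{\alpha\alpha}$ need not decrease the quadratic form $\B{q}_\alpha^T M_d(\mu)\B{q}_\alpha$ (the off-diagonal moments intervene). The clean fix is to scale the whole block: if the inequality were slack, $t\B{q}_\alpha$ for $t<1$ close to $1$ would remain feasible (the subspace constraints are homogeneous and $t\B{q}_{\alpha\alpha}$ still exceeds $\exp(\theta_\alpha)$) while the objective scales by $t^2<1$; equivalently, the unconstrained minimizer over the subspace is $0$, which is infeasible, so the strictly convex minimizer must sit on the boundary of the half-space. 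With that repair the argument is complete.
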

\begin{proof}	
	\paragraph{General remarks.} Observe that (\ref{eq:optPolyAppendix}) is a convex optimization problem as we have $\int_{\RR^p} \sum_{\alpha \in \NN^p_d} Q_{\alpha}(\B{x})^2 d\mu(\B{x}) = \sum_{\alpha \in \NN^p_d} \B{q}_\alpha^T\,M_d(\mu)\,\B{q}_\alpha$, which is strictly convex in $\left\{\B{q}_\alpha  \right\}_{\alpha \in \NN^p_d}$. The proof is based on KKT optimality conditions for Problem (\ref{eq:optPolyAppendix}). We first prove that any optimal solution should be of the form $Q^*_\alpha= \sqrt{\lambda}P_\alpha$, $\alpha\in\NN^p_d$, for some $\lambda>0$. Then we show that there exists a solution of the KKT system which has this form and finally that this solution is unique. The conclusion of Theorem 1 will then follow from Lemma 1. We begin with some notations that we will use throughout the proof.

	\paragraph{Notation.} Let $\left\{ \B{e}_{\alpha} \right\}_{\alpha \in \NN^p_d}$ denote the canonical basis of $\RR^{s(d)}$ indexed by $\alpha \in \NN_d^p$ according to $\leqgl$ order. The orthonormal polynomials 
	$\left\{ P_{\alpha} \right\}_{\alpha \in \NN_d^p}$ (with respect to $\mu$)
	are uniquely defined. For each $\alpha \in \NN_d^p$, we write $\B{p}_\alpha = \left( \B{p}_{\alpha\beta} \right)_{\beta \in \NN_d^p} \in \RR^{s(d)}$ the coefficients of the polynomial $P_\alpha$. 
	By construction of $P_\alpha$, for every $\alpha,\beta \in \NN_d^p$, $\alpha \lgl \beta$, $\B{p}_{\alpha\beta} = 0$ and $\B{p}_{\alpha\alpha} > 0$.

	\paragraph{Optimality conditions}
	Problem (\ref{eq:optPolyAppendix}) is strictly feasible, we can choose 
	any $\theta$ such that $\sum_\alpha\theta_\alpha=0$ and for every $\alpha\in\NN^p_d$, set
	$Q_\alpha:=\kappa P_\alpha$ for some sufficiently large $\kappa>0$.
	Therefore the KKT optimality conditions are necessary and sufficient for global optimality. We introduce Lagrange multipliers for problem (\ref{eq:optPolyAppendix}): $\lambda_\alpha \geq 0$ for each inequality constraint, $\lambda_{\alpha \beta} \in \RR$ for each linear equality constraint on polynomials with $\alpha \lgl \beta$ and $\lambda \in \RR$ for the last linear equality constraint on $\left\{ \theta_\alpha \right\}_{\alpha \in \NN^p_d}$. The KKT optimality conditions for problem (\ref{eq:optPolyAppendix}) can be written as follows
	\begin{align}
		\lambda_{\alpha} &\geq 0,\quad \B{e}_{\alpha}^T \B{q}^*_\alpha \geq \exp(\theta^*_{\alpha}),\quad\alpha \in \NN^p_d,\label{eq:KKT1}\\
		\B{e}_{\beta}^T \B{q}^*_\alpha&= 0,\quad \alpha,\, \beta \in \NN^p_d, \quad\alpha \lgl \beta,\label{eq:KKT2}\\
		\sum_{\alpha\in \NN^p_d} \theta^*_\alpha &= 0,\label{eq:KKT3}\\
		M_d(\mu) \B{q}^*_{\alpha} &= \lambda_\alpha \B{e}_\alpha + 
		\sum_{\alpha \lgl \beta} \lambda_{\alpha\beta}\,\B{e}_{\beta},\quad \alpha \in \NN^p_d,\label{eq:KKT4}\\
		\lambda_\alpha \exp(\theta^*_{\alpha}) &= \lambda_\alpha \B{e}_{\alpha}^T\B{q}^*_{\alpha} = \lambda,\quad \alpha \in \NN^p_d,\label{eq:KKT5}
	\end{align}
	for optimal variables $\theta_{\alpha}^*$, polynomials $Q_{\alpha}^*$ with coefficients $\B{q}_{\alpha}^* \in \RR^{s(d)}$, for each $\alpha \in \NN_d^p$. We next show that the part $(Q_\alpha^*)_{\alpha \in \NN_d^p}$  of an optimal solution is necessarily	a family of orthogonal polynomials. 
	\paragraph{Any optimal solution has the form $Q^*_\alpha= \sqrt{\lambda}P_\alpha$, $\alpha\in\NN^p_d$, for some $\lambda>0$.} 
	Since KKT conditions are necessary and sufficient for optimality, we only focus on them. For each $\alpha\neq0$ and $\beta\lgl\alpha$, multiplying (\ref{eq:KKT4}) by $\B{e}_\beta$, we obtain
	\begin{align}
		\label{eq:orthoCondit1}
		\left\langle X^\beta, Q_\alpha^* \right\rangle_{\mu} &= \int \B{x}^\beta Q^*_\alpha(\B{x})\,d\mu(\B{x})\,=\,\B{e}_\beta^T\,M_d(\mu)\,\B{q}^*_\alpha\,=\,\lambda_\alpha\,\B{e}_\beta^T\B{e}_\alpha+\sum_{\alpha \lgl \gamma} \lambda_{\alpha\gamma}\,\B{e}_{\beta}^T\B{e}_\gamma\,=\,0.
	\end{align}
Similarly, multiplying (\ref{eq:KKT4}) by $\B{q}^*_\alpha$ yields for all $\alpha \in \NN_d^p$,
\begin{align}
	\label{eq:orthoCondit2}
	\left\langle Q_\alpha^*, Q_\alpha^* \right\rangle_{\mu}&=\int Q^*_\alpha(\B{x})^2\,d\mu(\B{x})\,=\,(\B{q}^*_\alpha)^T\,M_d(\mu)\,\B{q}^*_\alpha\,=\,\lambda_\alpha\,(\B{q}^*_\alpha)^T\B{e}_\alpha\,=\,\lambda,
\end{align}
where we have used (\ref{eq:KKT5}) for the last identity. In particular, with $\alpha=0$, $Q^*_0(\B{x})=\B{q}^*_{00}$ ($\geq\exp(\theta^*_0)$) for all $\B{x}$ and so
\[\lambda\,=\,\int Q^*_0(\B{x})^2\,d\mu(\B{x})\,=\,(\B{q}^*_{00})^2\,\int d\mu\geq\, \exp(2\theta^*_0),\]
which shows that $\lambda>0$.
Next, combining (\ref{eq:orthoCondit1}), (\ref{eq:orthoCondit2}) and the condition (\ref{eq:KKT2}), we immediately deduce 
\begin{align}
	\label{eq:orthoCondit3}
	\left\langle Q_\beta^*, Q_\alpha^* \right\rangle_{\mu}&=\int Q^*_\beta(\B{x})\,Q^*_\alpha(\B{x})\,d\mu(\B{x})= 
	\begin{cases}
		\lambda &\mbox{if } \alpha=\beta\\
		0& \text{otherwise}.
	\end{cases}
\end{align}
Finally, for every $\alpha \in \NN_d^p$, multiplying (\ref{eq:KKT4}) by $\B{e}_{\alpha}$ yields
\begin{align}
	\label{eq:orthoCondit4}
	\left\langle X^\alpha, Q_\alpha^* \right\rangle_{\mu}&= \int \B{x}^\alpha Q^*_\alpha(\B{x})\,d\mu(\B{x})\,=\, \B{e}_\alpha^T\,M_d(\mu)\,\B{q}^*_\alpha\,=\,\lambda_\alpha > 0,\quad \alpha\in\NN^p_d.
\end{align}
The last inequality follows from (\ref{eq:KKT5}). Indeed, suppose $\lambda_\alpha=0$ for  some $\alpha\in \NN_d^p$, this would yield $\lambda=0$. Since we have shown that $\lambda >0$, it must also hold that $\lambda_\alpha>0$ for all $\alpha$. Combining relations (\ref{eq:KKT2}), (\ref{eq:orthoCondit1}), (\ref{eq:orthoCondit3}) and (\ref{eq:orthoCondit4}), we have shown that the $\left\{Q^*_\alpha  \right\}_{\alpha \in \NN_d^p}$ form a family of orthogonal polynomials with respect to $\mu$. In addition, by the uniqueness of the orthonormal basis $\left\{P_\alpha  \right\}_{\alpha\in \NN_d^p}$, it follows from (\ref{eq:orthoCondit3}) that $Q^*_\alpha=\sqrt{\lambda}\,P_\alpha$ for every $\alpha\in\NN^p_d$. 

\paragraph{There exists a solution of this form.}
Recall that, for each $\alpha \in \NN_d^p$, $\B{p}_\alpha = \left( \B{p}_{\alpha\beta} \right)_{\beta \in \NN_d^p} \in \RR^{s(d)}$ is the vector of coefficients of the polynomial $P_{\alpha}$ which satisfies by construction $\B{p}_{\alpha\alpha} > 0$ and $\B{p}_{\alpha\beta} = 0$ for all $\beta \in \NN_d^p$, $\alpha \lgl \beta$. We use the following assignment for the primal and dual variables.
\begin{align}
	\label{eq:optAssignment}
	\lambda&=\left(\prod_{\alpha \in \NN_d^p}\B{p}_{\alpha\alpha}  \right)^{\frac{-2}{s(d)}} > 0\\
	\lambda_\alpha &= \sqrt{\lambda} \B{e}_{\alpha}^T M_d(\mu) \B{p}_\alpha = \frac{\sqrt{\lambda}}{\B{p}_{\alpha\alpha}} > 0,\quad \alpha \in \NN_d^p\nonumber\\
	\lambda_{\alpha\beta} &= \sqrt{\lambda} \B{e}_\beta^T M_d(\mu) \B{p}_\alpha,\quad \alpha,\beta \in \NN_d^p, \alpha \lgl \beta\nonumber\\
	\B{q}_\alpha^* &= \sqrt{\lambda} \B{p}_\alpha,\quad \theta_\alpha^* = \log(\sqrt{\lambda} \B{p}_{\alpha\alpha}), \quad \alpha \in \NN_d^p\nonumber.
\end{align}
Using orthonormality of the polynomials $\left\{ P_{\alpha} \right\}_{\alpha \in \NN_d^p}$, it can be check that the assignment (\ref{eq:optAssignment}) satisfies KKT optimality conditions (\ref{eq:KKT1}), (\ref{eq:KKT2}), (\ref{eq:KKT3}), (\ref{eq:KKT4}) and (\ref{eq:KKT5}). We have therefore constructed an optimal solution of (\ref{eq:optPolyAppendix}) with the desired form.
	
\paragraph{The optimal solution is unique.} From what precedes any optimal solution of (\ref{eq:optPolyAppendix})  is necessarily such that $Q^*_\alpha=\sqrt{\lambda}P_\alpha$, for every $\alpha\in\NN^n$, for some $\lambda > 0$. In addition the optimal value of (\ref{eq:optPolyAppendix}) is $s(d)\lambda$. Suppose that there exists two different optimal solutions $(Q_\alpha,\theta_\alpha)_{\alpha\in \NN_d^p}$ and
$(Q_\alpha',\theta_\alpha')_{\alpha\in \NN_d^p}$ with associated dual variables $(\lambda, \lambda_\alpha, \lambda_{\alpha\beta})_{\alpha,\beta\in \NN_d^p}$ and $(\lambda', \lambda'_\alpha, \lambda'_{\alpha\beta})_{\alpha,\beta\in \NN_d^p}$. Then necessarily $\lambda=\lambda'$, $Q_\alpha=Q_\alpha'=\sqrt{\lambda}P_\alpha$ and $\lambda_\alpha,\lambda'_\alpha >0$ for all $\alpha \in \NN_d^p$. But then from (\ref{eq:KKT5}), $\sqrt{\lambda}\B{p}_{\alpha\alpha}=\exp(\theta_\alpha)=\exp(\theta_\alpha')$ 
and so $\theta_\alpha'=\theta_\alpha$ for every $\alpha\in\NN^p_d$. Therefore the solution is unique and this concludes the proof of Theorem 1.
\end{proof}
\subsection{Proof of Theorem 2}
\begin{theorem}\label{th:christoffelAppendix}
	Let Assumption \ref{ass:mainAssumptionAppendix} hold and let $\bar{\B{x}}\in\RR^p$ be fixed, arbitrary. Then
	\begin{equation}
		\label{eq:christoffelAppendix}
		\tag{7}
		Q_{\mu,d}(\bar{\B{x}})^{-1}\,=\,\min_{P\in\RR_d[X]}\:\left\{\int P(\B{x})^2\,d\mu: P(\bar{\B{x}})\,=\,1 \right\}.
	\end{equation}
\end{theorem}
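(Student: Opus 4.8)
The plan is to prove the Christoffel function identity \eqref{eq:christoffelAppendix} by combining the reproducing kernel structure of $Q_{\mu,d}$ with a direct Lagrangian/Cauchy--Schwarz argument. First I would recall from Lemma \ref{lem:connectionOrthoAppendix} that $Q_{\mu,d} = \sum_{\alpha \in \NN^p_d} P_\alpha^2$, where $\{P_\alpha\}_{\alpha \in \NN^p_d}$ is the orthonormal basis of $\RR_d[X]$ with respect to $\langle \cdot, \cdot \rangle_\mu$. Then for any $P \in \RR_d[X]$ we may expand $P = \sum_{\alpha} c_\alpha P_\alpha$ with $c_\alpha = \langle P, P_\alpha\rangle_\mu$, so that $\int P(\B{x})^2 \, d\mu(\B{x}) = \sum_\alpha c_\alpha^2$ and, crucially, $P(\bar{\B{x}}) = \sum_\alpha c_\alpha P_\alpha(\bar{\B{x}})$. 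The constraint $P(\bar{\B{x}}) = 1$ thus becomes the single linear constraint $\sum_\alpha c_\alpha P_\alpha(\bar{\B{x}}) = 1$ on the coefficient vector $\B{c} = (c_\alpha)_{\alpha \in \NN^p_d}$, and the problem reduces to minimizing $\|\B{c}\|^2$ over a hyperplane.

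The core step is then elementary finite-dimensional optimization: minimize $\|\B{c}\|_2^2$ subject to $\langle \B{c}, \B{w}\rangle = 1$, where $w_\alpha := P_\alpha(\bar{\B{x}})$. Note $\B{w} \neq 0$ since $P_0$ is a nonzero constant, so $Q_{\mu,d}(\bar{\B{x}}) = \|\B{w}\|^2 > 0$ and the feasible set is nonempty. By Cauchy--Schwarz, $1 = \langle \B{c}, \B{w}\rangle^2 \leq \|\B{c}\|^2 \|\B{w}\|^2$, hence $\|\B{c}\|^2 \geq 1/\|\B{w}\|^2 = 1/Q_{\mu,d}(\bar{\B{x}})$, with equality achieved by $\B{c}^* = \B{w}/\|\B{w}\|^2$. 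Translating back, the optimal polynomial is $P^* = \sum_\alpha \frac{P_\alpha(\bar{\B{x}})}{Q_{\mu,d}(\bar{\B{x}})} P_\alpha$, which indeed satisfies $P^*(\bar{\B{x}}) = \sum_\alpha P_\alpha(\bar{\B{x}})^2 / Q_{\mu,d}(\bar{\B{x}}) = 1$ and $\int (P^*)^2 \, d\mu = \|\B{c}^*\|^2 = 1/Q_{\mu,d}(\bar{\B{x}})$. This gives both the value of the minimum and an explicit minimizer.

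An alternative, essentially equivalent route would bypass orthonormal polynomials and work directly with the moment matrix: write $P(\B{x}) = \langle \B{p}, \B{v}_d(\B{x})\rangle$, so $\int P^2 \, d\mu = \B{p}^T M_d(\mu) \B{p}$ and $P(\bar{\B{x}}) = \langle \B{p}, \B{v}_d(\bar{\B{x}})\rangle$; minimizing a positive-definite quadratic form $\B{p}^T M_d(\mu) \B{p}$ under one linear constraint $\langle \B{p}, \B{v}_d(\bar{\B{x}})\rangle = 1$ has the closed-form solution $\B{p}^* = M_d(\mu)^{-1} \B{v}_d(\bar{\B{x}}) / (\B{v}_d(\bar{\B{x}})^T M_d(\mu)^{-1} \B{v}_d(\bar{\B{x}}))$ with optimal value $1 / (\B{v}_d(\bar{\B{x}})^T M_d(\mu)^{-1} \B{v}_d(\bar{\B{x}})) = Q_{\mu,d}(\bar{\B{x}})^{-1}$, using Definition \ref{def:SOSpoly}. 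I would likely present this second version as the main line since it is self-contained given Assumption \ref{ass:mainAssumptionAppendix} and avoids invoking the orthonormal family, perhaps remarking on the first interpretation. There is no real obstacle here; the only point requiring a word of care is that $M_d(\mu) \succ 0$ guarantees the quadratic form is strictly convex (so the minimizer is unique) and that the feasible set is nonempty, both of which follow immediately from Assumption \ref{ass:mainAssumptionAppendix} and the fact that $\B{v}_d(\bar{\B{x}}) \neq 0$ (its first coordinate is $1$).
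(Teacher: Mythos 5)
Your proposal is correct, and your first argument (orthonormal expansion $P=\sum_\alpha c_\alpha P_\alpha$, reduction to minimizing $\sum_\alpha c_\alpha^2$ under one linear constraint, Cauchy--Schwarz with equality at $c_\alpha = P_\alpha(\bar{\B{x}})/Q_{\mu,d}(\bar{\B{x}})$) is exactly the proof the paper gives. The moment-matrix variant you say you would present as the main line is the same strictly convex quadratic minimization written in the monomial basis instead of the orthonormal one, so it is an equivalent reformulation rather than a genuinely different route; either version is fine.
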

\begin{proof}
	
Fix an arbitrary $P \in \RR_d[X]$ and $\bar{\B{x}} \in \RR^p$. Assume that $P(\bar{\B{x}}) = 1$. Letting for all $\alpha \in \NN_d^p$, $a_\alpha = \left\langle P, P_{\alpha}\right\rangle_\mu$, by orthonormality, we have
\begin{align}
	\label{eq:affineOrtho}
	P = \sum_{\alpha \in \NN_d^p} a_\alpha P_\alpha,\\
	\left\langle P, P\right\rangle_\mu = \sum_{\alpha \in \NN_d^p} a_\alpha^2.\nonumber
\end{align}
The assumption that $P(\bar{\B{x}}) = 1$ can be used in conjonction with Cauchy-Schwartz inequality to obtain
\begin{align}
	\label{eq:affineOrtho2}
	1 & = P(\bar{\B{x}}) \\
	&= \sum_{\alpha \in \NN_d^p} a_{\alpha} P_\alpha(\bar{\B{x}})\nonumber\\
	&\leq \left( \sum_{\alpha \in \NN_d^p} a_{\alpha}^2 \right)\left( \sum_{\alpha \in \NN_d^p} P_\alpha(\bar{\B{x}})^2 \right)\nonumber\\
	& = \left\langle P, P\right\rangle_\mu Q_{\mu,d}(\bar{\B{x}}),\nonumber
\end{align}
where the last equality comes from the definition of $\left\langle \cdot,\cdot\right\rangle_\mu$ and Lemma 2. There is equality in equation \ref{eq:affineOrtho2} if and only if $a_{\alpha} = P_{\alpha}(\bar{\B{x}}) / Q_{\mu,d}(\bar{\B{x}})$ which always leads to $P(\bar{\B{x}}) = 1$. This shows that the infimum is attained and concludes the proof.
\end{proof}
\newpage
\section{Details about the preparation of the datasets}
We reproduce the exact same manipulations as in the references. We downloaded the \texttt{kddcup.data} from the following repository 

\url{https://archive.ics.uci.edu/ml/machine-learning-databases/kddcup99-mld/}

This file contains $4898431$ instances of network connections described by 42 features including the type of connection (attack or normal). We filter the records by keeping only those for which the variable \textit{logged in} is positive. We kept the labels (type of connection) together with the four most important features: \textit{service, duration, src\_bytes, dst\_bytes}. We applied to the three last variables (numerical) the function $\log(\cdot + 0.1) / 10$. We build four datasets with the four most frequent instances of \textit{service} and group all the remaining records in the dataset \textit{others} to get our five datasets.
\section{Illustration of affine invariance}
The following Figure illustrate the affine invariance property described in Lemma 1.
\begin{figure}[h!]
	\centering
	\includegraphics[width=\textwidth]{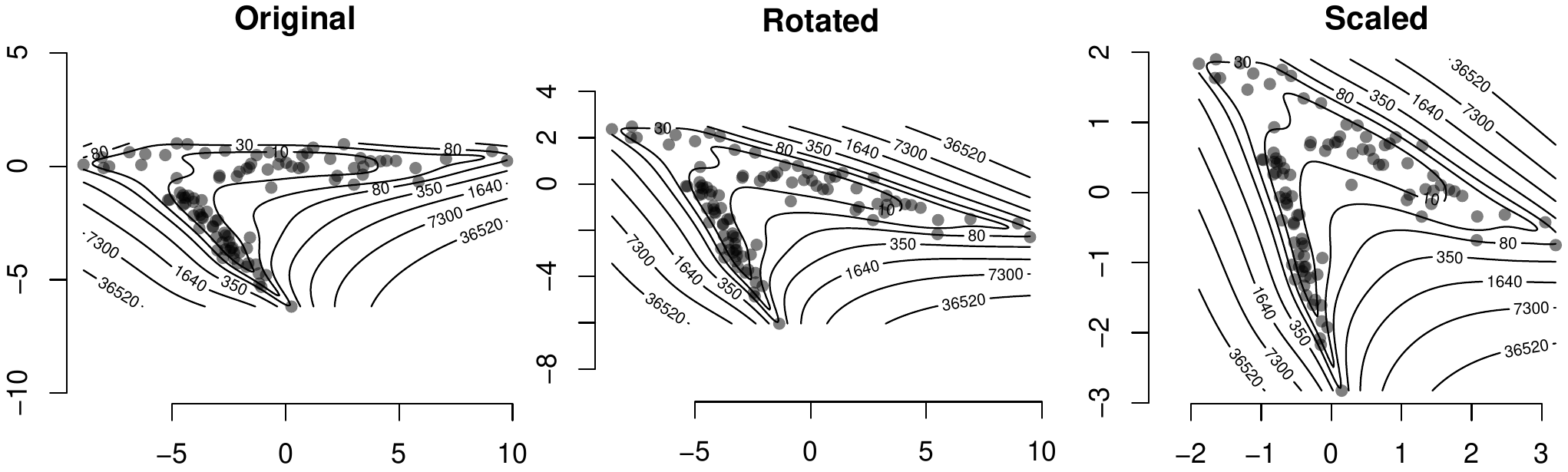}
	\caption{Empirical measure and level sets of $Q_{\mu,d}$  for three configurations of the same cloud of points ($d = 4$). The cloud in the middle is rotation of the original one and the cloud on the right is the same after centering and scaling. We observe that the level sets follow the same transformations.}
	\label{fig:examplesAffine}
\end{figure}
\end{document}